\newcommand{\I}[1]{1_{[#1]}}
\newcommand{\p}{{P}}
\newcommand{\pc}{D_c}
\newcommand{\pnc}{D_{\lnot c}}
\newcommand{\pr}{\text{R}}
\newcommand{\ours}{ECM Loss }
\newcommand{\sig}{s_c(x)}
\newcommand{\ldet}{\mathcal{L}^\mathrm{Det}_c}
\newcommand{\etal}{\textit{et al.}\xspace}
\begin{document}
\pagestyle{headings}
\mainmatter
\def\ECCVSubNumber{640}  

\title{Long-tail Detection with Effective Class-Margins} 


\newcommand\myparagraph[1]{\medskip \noindent{\bf #1}}

\titlerunning{Long-tail Detection with Effective Class-Margins}
%
\author{Jang Hyun Cho\inst{1} \and Philipp Krähenbühl\inst{1}}
\authorrunning{J. H. Cho and P. Krähenbühl}
%
\institute{The University of Texas at Austin, Austin TX 78712, USA \\
\email{janghyuncho7@utexas.edu, philkr@cs.utexas.edu }\\
\url{https://github.com/janghyuncho/ECM-Loss}}
\maketitle

\begin{abstract}
Large-scale object detection and instance segmentation face a severe data imbalance.
The finer-grained object classes become, the less frequent they appear in our datasets.
However, at test-time, we expect a detector that performs well for all classes and not just the most frequent ones.
In this paper, we provide a theoretical understanding of the long-trail detection problem.
We show how the commonly used mean average precision evaluation metric on an unknown test set is bound by a margin-based binary classification error on a long-tailed object detection training set.
We optimize margin-based binary classification error with a novel surrogate objective called \textbf{Effective Class-Margin Loss} (ECM).
The ECM loss is simple, theoretically well-motivated, and outperforms other heuristic counterparts on LVIS v1 benchmark over a wide range of architecture and detectors. Code is available at \url{https://github.com/janghyuncho/ECM-Loss}.
\keywords{object detection, long-tail object detection, long-tail instance segmentation, margin bound, loss function}
\end{abstract}


\begin{figure}[t]
\begin{center}
\begin{subfigure}{.32\textwidth}
\centering
  \includegraphics[width=\linewidth,page=1,trim=0 -1cm 0 -1.5cm]{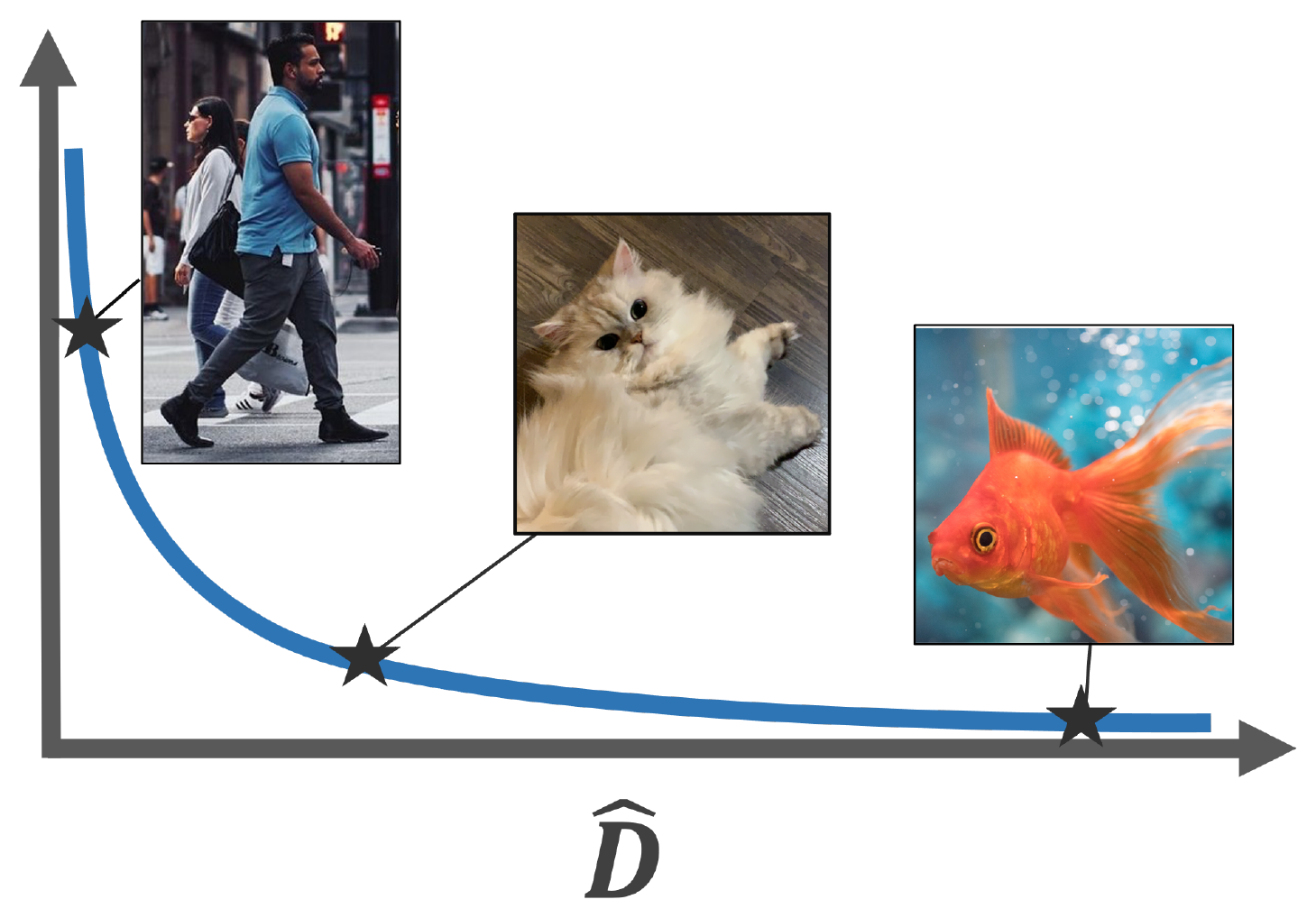}
  \caption{Data distribution}
  \label{fig:data_dist}
\end{subfigure}
\hfill
\begin{subfigure}{.32\textwidth}
\centering
  \includegraphics[width=\linewidth,page=2,trim=0 -2cm 0 -3.5cm,]{teaser_cropped.pdf}
  $\frac{1}{n}\sum_{i=1}^n \ell(s(x_i), y_i)$
  \caption{Training objective}
  \label{fig:training}
\end{subfigure}
\hfill
\begin{subfigure}{.32\textwidth}
\centering
  \includegraphics[width=\linewidth,page=3,trim=0 0 0 0cm]{teaser_cropped.pdf}
  $\frac{1}{C} \sum_{c=1}^C AP_c$
  \caption{Evaluation objective}
  \label{fig:testing}
\end{subfigure}
\end{center}
\caption{In long-trail detection, training objectives (b) do not align with evaluation objectives (c).
During training, we optimize an empirical objective on a long-tail data distribution (a).
However at test time, we expect a detector that performs well on all classes.
In this paper, we connect the detection objective (c) on an unknown test set to an empirical training objective (b) on a long-tail real-world data distribution (a) through the margin-bound theory~\cite{kakade,bartlett,panchenko,ldam}.
}
\label{fig:teaser}
\end{figure}

\section{Introduction}

The state-of-the-art performance of common object detectors has more than tripled over the past 5 years. 
However, much of this progress is measured on just 80 common object categories~\cite{coco}.
These categories cover only a small portion of our visual experiences.
They are nicely balanced and hide much of the complexities of large-scale object detection.
In a natural setting, objects follow a long-tail distribution, and artificially balancing them is hard~\cite{lvis}.
Many recent large-scale detection approaches instead balance the training loss~\cite{seesaw,eqlv2,disalign} or its gradient~\cite{eqlv1,bags} to emulate a balanced training setup.
Despite the steady progress over the past few years, these methods largely rely on heuristics or experimental discovery.
Consequently, they are often based on intuition, require extensive hyper-parameter tuning, and include a large bag-of-tricks.

In this paper, we take a statistical approach to the problem.
The core issue in long-tail recognition is that training and evaluation metrics do not line up, see Figure~\ref{fig:teaser}.
At test time, we expect the detector to do well on all classes, not just a select few.
This is reflected in the common evaluation metric: mean-average-precision (mAP)~\cite{pascal,coco,lvis,openimages,o365}. 
At training time, we ideally learn from all available data using a cross-entropy~\cite{fasterrcnn,maskrcnn,detr} 
or related loss~\cite{yolo,focal,gfocal,varifocal}. 
Here, we draw a theoretical connection between the balanced evaluation metric, mAP, and margin-based binary classification.
We show that mAP is bound from above and below by a pairwise ranking error, which in turn reduces to binary classification.
We address the class imbalance through the theory of margin-bounds~\cite{kakade,bartlett,panchenko,ldam}, 
and reduce detector training to a margin-based binary classification problem.

Putting it all together, margin-based binary classification provides a closed-form solution for the ideal margin for each object category.
This margin depends only on the number of positive and negative annotations for each object category.
At training time, we relax the margin-based binary classification problem to binary cross entropy on a surrogate objective.
We call this surrogate loss \textbf{Effective Class-Margin Loss} (ECM).
This relaxation converts margins into weights on the loss function.
Our ECM loss is completely \textit{hyperparameter-free} and applicable to a large number of detectors and backbones.

We evaluate the ECM loss on LVIS v1 and OpenImages.
It outperforms state-of-the-art large-scale detection approaches across various frameworks and backbones.
The ECM loss naturally extends to one-stage detectors~\cite{atss,centernet,varifocal,fcos}.

\section{Related works}
\myparagraph{Object detection} is largely divided into one-stage and two-stage pipelines.
In one-stage object detection~\cite{centernet,focal,yolo,yolov2,yolov3}, classification and localization are simultaneously predicted densely on each coordinate of feature map representation.
Hence, one-stage detection faces extreme foreground-background imbalance aside from cross-category imbalance.
These issues are addressed either by carefully engineered loss function such as the Focal Loss~\cite{focal,centernet}, or sampling heuristics like ATSS~\cite{atss}. 
Two-stage object detection~\cite{fastrcnn,fasterrcnn,cascadercnn} mitigates the foreground-background imbalance using a category-agnostic classifier in the first-stage. However, neither type of detection pipelines handles cross-category imbalance. 
We show that our ECM loss trains well with both types of detectors.

\myparagraph{Long-tail detection.} Learning under severely long-tailed distribution is challenging.
There are two broad categories of approaches: data-based and loss-based.
Data-based approaches include external datasets~\cite{mosaic}, extensive data augmentation with larger backbones~\cite{copypaste}, or optimized data-sampling strategies~\cite{lvis,simcal,forest}.
Loss-based approaches~\cite{eqlv1,eqlv2,seesaw,federated_loss,bags,disalign,deconfound} modify or re-weights the classification loss used to train detectors.
They perform this re-weighting either \textit{implicitly} or \textit{explicitly}.
The Equalization Loss~\cite{eqlv1} ignores the negative gradient for rare classes.
It builds on the intuition that rare classes are ``discouraged'' by all the negative gradients of other classes (and background samples).
Balanced Group Softmax (BaGS)~\cite{bags} divides classes into several {groups} according to their frequency in the training set.
BaGS then applies a cross-entropy with softmax only within each group.
This implicitly controls the negative gradient to the rare classes from frequent classes and backgrounds.
The federated loss~\cite{federated_loss} only provides negative gradients to classes that appear in an image.
This implicitly reduces the impact of the negative gradient to the rare classes.
The Equalization Loss v2~\cite{eqlv2} directly balances the ratio of cumulative positive and negative gradients per class.
The Seesaw Loss~\cite{seesaw} similarly uses the class frequency to directly reduce the weight of negative gradients for rare classes.
In addition, it compensates for the diminished gradient from misclassifications by scaling up by the ratio of the predicted probability of a class and that of the ground truth class.
These methods share the common premise that overwhelming negative gradients will influence the training dynamics of the detector and result in a biased classifier.
While this makes intuitive sense, there is little analytical or theoretical justification for particular re-weighting schemes.
This paper provides a theoretical link between commonly used mean average precision on a test set and a \textbf{weighted binary cross entropy} loss on an unbalanced training set.
We provide an optimal weighting scheme that bounds the expected test mAP.

\myparagraph{Learning with class-margins.}
Margin-based learning has been widely used in face recognition~\cite{large_margin_loss,additive_margin_loss,arcface} and classification under imbalanced data~\cite{ldam}.
In fact, assigning proper margins has a long history in bounds to generalization error~\cite{kakade,lipschitz,bartlett,panchenko}.
Cao et al.~\cite{ldam} showed the effectiveness of analytically derived margins in imbalanced classification.
In \textit{separable} two-class setting (i.e., training error can converge to 0), closed form class-margins follow from a simple constrained optimization.
Many recent heuristics in long-tail detection use this setting as the basis of re-weighted losses~\cite{seesaw,eqlv1,bags}.
We take a slightly different approach.
We show that the margin-based classification theory applies to detection by first establishing a connection between mean average precision (mAP) and a pairwise ranking error.
This ranking error is bound from above by a margin-based classification loss.
This theoretical connection then provides us with a set of weights for a surrogate loss on a \textit{training set} that optimizes the mAP on an \textit{unknown test set}.

\myparagraph{Optimizing average precision.}
Several works optimize the average precision metric directly.
Rol\'{i}nek et al.~\cite{blackboxdiff} address non-differentiability and introduced black-box differentiation.
Chen et al.~\cite{aploss} propose an AP Loss which applies an error-driven update mechanism for the non-differentiable part of the computation graph.
Oksuz et al.~\cite{alrploss} took a similar approach to optimize for Localization-Recall-Precision (LRP)~\cite{lrp}.
In contrast, we reduce average precision to ranking and then margin-based binary classification, which allows us to use tools from learning theory to bound and minimize the generalization error of our detector.


\section{Preliminary}
\label{preliminary}
\myparagraph{Test Error Bound.}
Classical learning theory bounds the test error in terms of training error and some complexity measures.
Much work builds on the Lipschitz bound of Bartlett and Mendelson~\cite{lipschitz}.
For any Lipschitz continuous loss function $\ell$, it relates the expected error $\mathcal{L}(f)=E[\ell(f(x),y)]$ and empirical error $\hat{\mathcal{L}}(f)=\frac{1}{n}\sum_{i=1}^n \ell(f(x_i),y_i)$ over a dataset of size $n$ with inputs $x_i$ and labels $y_i$.
In detection, we commonly refer to the expected error as a test error over an unknown test set or distribution, and empirical error as a training error.
Training and testing data usually follow the same underlying distribution, but different samples.


\begin{theorem}
[Class-margin bound~\cite{kakade,ldam}]
\label{lipschitz}
Suppose $\ell(x) = \I{x < 0}$ a zero-one error and $\ell_\gamma=\I{x < \gamma}$ a margin error with a non-negative margin $\gamma$. Similarly, $\mathcal{L}_y(f)=E[\ell(f(x)_y)]$ and $\mathcal{L}_{\gamma, y}(f)=E[\ell_\gamma(f(x)_y)]$. Then, for each class-conditional data distribution $P(X|Y=c)$, for all $f \in \mathcal{F}$ and class-margin $\gamma_c>0$, with probability $1-\delta_c$, the class-conditional test error for class $c\in C$ can be bounded from above as following:
\begin{align*}
\mathcal{L}_c(f)\le \hat{\mathcal{L}}_{\gamma_c, c}(f) + \frac{4}{\gamma_c} \mathcal{R}_n(\mathcal{F}) + \sqrt{\frac{\log(2/\delta_c)}{n_c}} + \epsilon(n_c, \gamma_c, \delta_c)
\end{align*}
where $\mathcal{R}_n(\mathcal{F})$ is the Rademacher complexity of a function class $\mathcal{F}$ which is typically bounded by $\sqrt{\frac{C(\mathcal{F})}{n}}$ for some complexity measure of $C$~\cite{ldam,kakade,bartlett}.
\label{thm:margin-bound}
\end{theorem}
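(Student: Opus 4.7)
The plan is to derive this bound by restricting to the class-conditional subsample of size $n_c$ and then invoking a standard margin-based Rademacher bound. First I would introduce a ramp surrogate $\phi_{\gamma_c}$ that sandwiches the zero-one and margin indicators: $\I{x<0} \le \phi_{\gamma_c}(x) \le \I{x < \gamma_c}$, where $\phi_{\gamma_c}$ is piecewise linear, clamped at $0$ above $\gamma_c$ and at $1$ below $0$, hence $\tfrac{1}{\gamma_c}$-Lipschitz. Passing through this surrogate is what lets us move from the non-Lipschitz indicators appearing in $\mathcal{L}_c$ and $\hat{\mathcal{L}}_{\gamma_c,c}$ to a function class amenable to Rademacher analysis.

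Next I would condition on $Y=c$ and treat the $n_c$ training examples of class $c$ as i.i.d.\ draws from $P(X|Y=c)$. Applying the classical Bartlett--Mendelson Rademacher bound to $\phi_{\gamma_c}\circ f$ gives, with probability at least $1-\delta_c$,
\[
E\big[\phi_{\gamma_c}(f(x)_c)\,\big|\,Y=c\big] \le \tfrac{1}{n_c}\!\sum_{i:y_i=c}\phi_{\gamma_c}(f(x_i)_c) + 2\mathcal{R}_{n_c}(\phi_{\gamma_c}\!\circ\!\mathcal{F}) + \sqrt{\tfrac{\log(2/\delta_c)}{n_c}}.
\]
I would then invoke Talagrand's contraction lemma to peel off $\phi_{\gamma_c}$: $\mathcal{R}_{n_c}(\phi_{\gamma_c}\circ\mathcal{F}) \le \tfrac{1}{\gamma_c}\mathcal{R}_{n_c}(\mathcal{F})$. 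Combined with the customary factor-of-$2$ from the standard symmetrization/contraction convention, this yields the $4/\gamma_c$ coefficient in the statement. Finally I would sandwich the indicators with the ramp: $\mathcal{L}_c(f)\le E[\phi_{\gamma_c}(f(x)_c)\mid Y=c]$ on the left and $\tfrac{1}{n_c}\sum_{i:y_i=c}\phi_{\gamma_c}(f(x_i)_c)\le \hat{\mathcal{L}}_{\gamma_c,c}(f)$ on the right, recovering the stated inequality up to the residual term $\epsilon(n_c,\gamma_c,\delta_c)$.

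The main obstacle is not any single step in isolation but the bookkeeping around the class-conditional reduction: one must argue that the $n_c$ positive examples can be treated as i.i.d.\ samples from $P(X|Y=c)$, which strictly speaking requires either conditioning on $n_c$ being fixed or paying a multiplicative correction from a binomial tail bound (precisely the slack that ends up absorbed into $\epsilon(n_c,\gamma_c,\delta_c)$). A secondary subtlety is that $\mathcal{R}_{n_c}(\mathcal{F})$ here must be understood as the Rademacher complexity of the \emph{scalar} class-$c$ output $x\mapsto f(x)_c$, not of the full multiclass map; one has to justify that the contraction lemma applies coordinatewise, which is standard but worth flagging. Finally, matching the constant $4$ exactly requires choosing the data-dependent vs.\ expected Rademacher convention consistently with \cite{kakade,ldam}; with the alternative normalization the same argument produces a $2/\gamma_c$ coefficient instead.
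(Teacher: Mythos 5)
Your proposal is correct and follows essentially the same route as the proof the paper relies on: the paper does not prove this theorem itself but cites Kakade et al.\ and Cao et al., whose argument is exactly the ramp-surrogate sandwich plus the Bartlett--Mendelson Rademacher bound and Talagrand contraction on the class-conditional subsample that you describe. Your flagged subtleties (the i.i.d.\ treatment of the $n_c$ class-$c$ samples being absorbed into $\epsilon(n_c,\gamma_c,\delta_c)$, and the factor $4$ vs.\ $2$ depending on the contraction convention) are the right ones and are consistent with how the cited works state the result.
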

Kakade et al.~\cite{kakade} prove the above theorem for binary classification, and Cao et al.~\cite{ldam} extend it to multi-class classification under a long-tail.
Their proof follows Lipschitz bounds of Bartlett and Mendelson~\cite{lipschitz}.
Theorem~\ref{thm:margin-bound} will be our main tool to bound the generalization error of a detector.


\myparagraph{Detection Metrics.}
Object detection measures the performance of a model through average precision along with different recall values.
Let $\mathrm{TP}(t)$, $\mathrm{FP}(t)$, and $\mathrm{FN}(t)$ be the true positive, false positive, and false negative detections for a score threshold $t$.
Let $\mathrm{Pc}(t)$ be the precision and $\mathrm{Rc}(t)$ be the recall.
Average precision $\mathrm{AP}$ then integrates precision over equally spaced recall thresholds $t \in T$:
\begin{align}
    \mathrm{Pc}(t)&=\frac{\mathrm{TP}(t)}{\mathrm{TP}(t) + \mathrm{FP}(t)}\quad \mathrm{Rc}(t) = \frac{\mathrm{TP}(t)}{\mathrm{TP}(t) + \mathrm{FN}(t)}\quad \mathrm{AP}= \frac{1}{|T|} \sum_{t\in T}\mathrm{Pc}(t).
\end{align}
Generally, true positives, false positives, and false negatives follow an assignment procedure that enumerates all annotated objects.
If a ground truth object has a close-by prediction with a score $s > t$, it counts as a positive.
Here closeness is measured by overlap.
If there is no close-by prediction, it is a false negative.
Any remaining predictions with score $s > t$ count towards false positives.
All above metrics are defined on finite sets and do not directly extend general distributions.

In this paper, we base our derivations on a probabilistic version of average precision.
For every class $c\in C$, let $\pc$ be the distribution of positive samples, and $\pnc$ be the distribution of negative samples.
These positives and negatives may use an overlap metric to ground truth annotations.
Let $P(c)$ and $P(\lnot c)$ be the prior probabilities on labels of class $c$ or not $c$.
$P(c)$ is proportional to the number of annotated examples of class $c$ at training time.
Let $s_c(x) \in [0,1]$ be the score of a detector for input $x$.
The probability of a detector $s_c$ to produce a true positive of class $c$ with threshold $t$ is $tp_c(t)=P(c) P_{x \sim \pc}(s_c(x) > t)$, false positive $fp_c(t)=P(\lnot c) P_{x \sim \pnc}(s_c(x) > t)$, and false negative $fn_c(t)=P(c)P_{x \sim \pc}(s_c(x) \le t)$.
This leads to a probabilistic recall and precision
\begin{align}
r_c(t) &= \frac{tp_c(t)}{tp_c(t)+fn_c(t)} = \frac{tp_c(t)}{P(c)} = P_{x \sim \pc}(s_c(x) > t),\\
p_c(t) &= \frac{tp_c(t)}{tp_c(t)+fp_c(t)} 
= \frac{r_c(t)}{r_c(t) + \alpha_c P_{x \sim \pnc}(s_c(x) > t)}.
\end{align}
Here, $\alpha_c = \frac{P(\lnot c)}{P(c)}$ corrects for the different frequency of foreground and background samples for class $c$.
By definition $1-r_c(t)$ is a cumulative distribution function $r_c(1)=0$, $r_c(0)=1$ and $r_c(t) \ge r_c(t+\delta)$ for $\delta > 0$.
Without loss of generality, we assume that the recall is strictly monotonous $r_c(t) > r_c(t+\delta)$\footnote{For any detector $s_c$ with a non-strict monotonous recall, there is a nearly identical detector $s^\prime_c$ with strictly monotonous recall: $s^\prime_c(x) = s_c(x)$ with chance $1-\varepsilon$ and uniform at random $s^\prime_c(x) \in U[0,1]$ with chance $\varepsilon$ for any small value $\varepsilon>0$.}.
For a strictly monotonous recall $r_c(t)$, the quantile function is the inverse $r_c^{-1}(\beta)$.
Average precision then integrates over these quantiles.

\begin{definition}[Probabilistic average precision]
\begin{equation*}
 ap_c = \int_0^1 p_c(r_c^{-1}(\beta))d\beta = \int_0^1 \frac{\beta}{\beta+\alpha P_{x \sim \pnc}(s_c(x) > r_c^{-1}(\beta))}d\beta
\end{equation*}
\end{definition}

There are two core differences between regular AP and probabilistic AP:
1) The probabilistic formulation scores a nearly exhaustive list of candidate objects, similar to one-stage detectors or the second stage of two-stage detectors.
It does not consider bounding box regression.
2) Regular AP penalizes duplicate detections as false positives, probabilistic AP does not.
This means that at training time, positives and negatives are strictly defined for probabilistic AP, which makes a proper analysis possible.
At test time, non-maxima-suppression removes most duplicate detections without major issues.

In the next section, we show how this probabilistic AP relates to a pairwise ranking error on detections.
\begin{definition}[Pairwise Ranking Error]
\label{pre_def}
\begin{align*}
\pr_c 
&= \p_{x, x^\prime \sim \pc \times \pnc} \big(s_c(x) < s_c(x^\prime) \big) = E_{x^\prime\sim \pnc}\left[1-r_c(s_c(x^\prime))\right]
\end{align*}
\end{definition}
The pairwise ranking error measures how frequently negative samples $x^\prime$ rank above positives $x$. The second equality is derived in supplement. 

While it is possible to optimize the ranking error empirically, it is hard to bound the empirical error.
We instead bound $\pr_c$ by a margin-based 0-1 classification problem and use Theorem~\ref{thm:margin-bound}.

\section{Effective Class-Margins}

We aim to train an object detector that performs well for all object classes.
This is best expressed by maximizing mean average precision over all classes equally: $mAP = \frac{1}{|C|}\sum_{c\in C}ap_c$.
Equivalently, we aim to minimize the detection error

\begin{equation}
  \mathcal{L}^\mathrm{Det} = 1 - mAP = \frac{1}{|C|}\sum_{c\in C}\underbrace{(1-ap_c)}_{\mathcal{L}^\mathrm{Det}_c}
\end{equation}

Optimizing the detection error or mAP directly is hard~\cite{aploss,alrploss,blackboxdiff,rsloss}. 
First, $ap_c$ involves a computation over the entire distribution of detections and does not easily factorize over individual samples.
Second, our goal is to optimize the expected detection error.
However, at training time, we only have access to an empirical estimate over our training set $\hat D$.

Despite these complexities, it is possible to optimize the expected detection error.
The core idea follows a series of bounds for each training class $c$:
$$
\mathcal{L}^\mathrm{Det}_c \lesssim m_c R_c \lesssim m_c \hat{\mathcal{L}}_{\gamma^\pm_c, c},
$$
where $\lesssim$ refers to inequalities up to a constant.
In Section~\ref{sec:errorbound}, we bound the detection error $\mathcal{L}^\mathrm{Det}_c$ by a weighted version of the ranking error $R_c$.
In Section~\ref{sec:rankingbound}, we directly optimize an empirical upper bound $\hat{\mathcal{L}}_{\gamma^\pm_c, c}$ to the weighted ranking error using class-margin-bounds in Theorem~\ref{thm:margin-bound}.
Finally, in Section~\ref{sec:ecm} we present a differentiable loss function to optimize the class-margin-bound.

\begin{figure}[t]
\begin{center} 
\begin{subfigure}{.32\textwidth}
  \includegraphics[width=\linewidth,page=1]{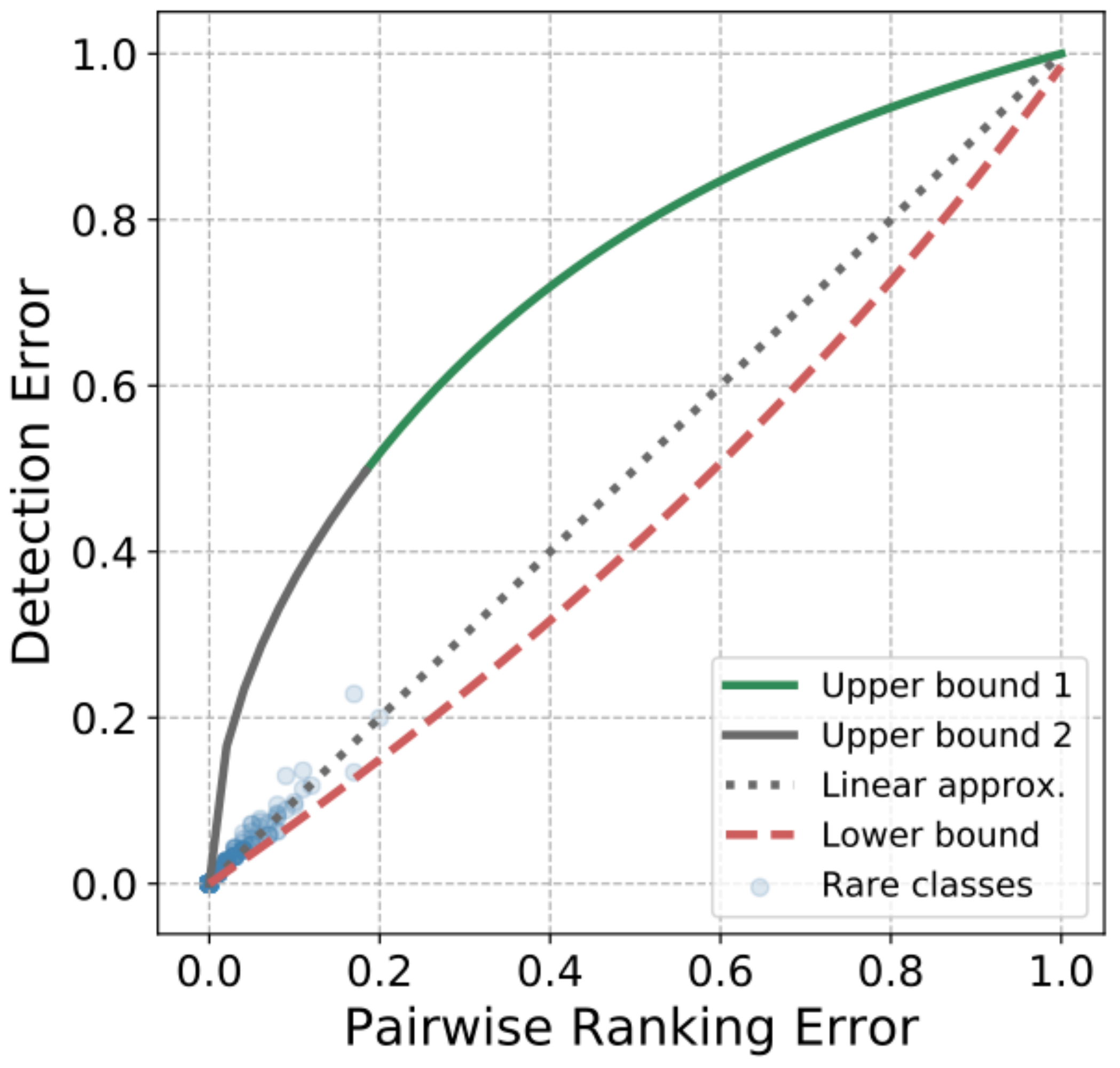} 
  \caption{Rare classes}
  \label{fig:bound_rare}
\end{subfigure}
\hfill
\begin{subfigure}{.32\textwidth}
  \includegraphics[width=\linewidth,page=2]{plot_new_small_cropped.pdf} \caption{Common classes}
  \label{fig:bound_common}
\end{subfigure}
\hfill
\begin{subfigure}{.32\textwidth}
  \includegraphics[width=\linewidth,page=3]{plot_new_small_cropped.pdf} \caption{Frequent classes}
  \label{fig:bound_frequent}
\end{subfigure}
\end{center}
\caption{Visualization of the upper and lower bound of Detection Error with respect to Pairwise Ranking Error. The solid and dotted lines are the theoretical bounds discussed in Theorem~\ref{thm:rank_bound}. We show that \textbf{actual detection errors} strictly follow the derived bounds. 
We evaluate multiple checkpoints of the same detector on rare, common, and frequent classes of lvis v1.
Each point represents a checkpoint's performance on one class.
We compute AP and ranking errors over the training set which has low errors especially for rare classes.
In practice the upper and lower bounds are tight and the linear approximation fits well.
}
\label{fig:bound_plot}
\end{figure}

\subsection{Detection Error Bound}
\label{sec:errorbound}

There is a strong correlation between the detection error $\ldet$ and the ranking objective $R_c$.
For example, a perfect detector, that scores all positives above negatives, achieves both a ranking and detection error of zero.
A detector that scores all negatives higher than positives has a ranking error of 1 and a detection error close to 1.
For other error values the the ranking error $R_c$ bounds the $ap_c$ from both above and below, as shown in Figure~\ref{fig:bound_plot} and Theorem~\ref{thm:rank_bound}.

\begin{theorem}[AP - Pairwise Ranking Bound]
\label{thm:rank_bound}
For a class $c$ with negative-to-positive ratio $\alpha_c=\frac{P(\lnot c)}{P(c)}$, the ranking error $R_c$ bounds the probabilistic average precision $ap_c$ from above and below:
\begin{align*}
 \alpha_c\log\left(\frac{1+\alpha_c}{1+\alpha_c-R_c}\right) \le \ldet \le \min \bigg(\sqrt{\frac{2}{3}\alpha_c \pr_c}, 1-\frac{8}{9} \frac{1}{1+2 \alpha_c \pr_c}\bigg).
\end{align*}
\end{theorem}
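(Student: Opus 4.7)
The plan is to eliminate the abstract detector from both sides of the bound by expressing everything through a single scalar function
\[
u(\beta) \;=\; \p_{x' \sim \pnc}\!\big(\sigm > r_c^{-1}(\beta)\big), \qquad \beta\in[0,1],
\]
which is nondecreasing with $u(0)=0$ and $u(1)=1$. A short layer-cake / integration-by-parts computation applied to the second form of Definition~\ref{pre_def} gives $\pr_c = \int_0^1 u(\beta)\,d\beta$, and substituting $u$ into the probabilistic AP formula gives
\[
\ldet \;=\; 1 - ap_c \;=\; \int_0^1 \frac{\alpha_c\, u(\beta)}{\beta + \alpha_c\, u(\beta)}\,d\beta.
\]
The theorem then reduces to functional inequalities for $\int \phi(u,\beta)\,d\beta$, where $\phi(u,\beta) = \alpha_c u/(\beta + \alpha_c u)$ is concave and increasing in $u$, over measurable $u:[0,1]\to[0,1]$ with $\int u = \pr_c$.

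For the lower bound, concavity of $\phi(\cdot,\beta)$ together with $\phi(0,\beta)=0$ and $\phi(1,\beta)=\alpha_c/(\beta+\alpha_c)$ gives the chord-below-graph inequality $\phi(u,\beta) \ge u\cdot \alpha_c/(\beta+\alpha_c)$, so $\ldet \ge \alpha_c \int_0^1 u(\beta)/(\beta+\alpha_c)\,d\beta$. Since $w(\beta) = 1/(\beta+\alpha_c)$ is nonincreasing, a one-line rearrangement argument shows that over all $u:[0,1]\to[0,1]$ with $\int u = \pr_c$, $\int u\,w$ is minimized by the right-concentrated step $u^\star(\beta) = \mathbf{1}[\beta \ge 1-\pr_c]$: writing $v = u - u^\star$ we have $\int v = 0$, $v \ge 0$ on $[0,1-\pr_c]$, and $v \le 0$ on $[1-\pr_c,1]$, hence $\int v\,w \ge w(1-\pr_c) \int v = 0$. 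Evaluating $\alpha_c\int_{1-\pr_c}^1 d\beta/(\beta+\alpha_c) = \alpha_c\log\!\big((1+\alpha_c)/(1+\alpha_c-\pr_c)\big)$ delivers the bound.

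For the upper bounds I would prove the two separately and take the minimum. The $1 - \tfrac{8}{9(1+2\alpha_c\pr_c)}$ bound is a single Cauchy--Schwarz with $f=\sqrt{\beta/(\beta+\alpha_c u)}$ and $g=\sqrt{\beta+\alpha_c u}$, so that $fg = \sqrt{\beta}$ and
\[
\Big(\int_0^1 \sqrt{\beta}\,d\beta\Big)^2 = \tfrac{4}{9} \;\le\; ap_c\cdot\big(\tfrac{1}{2}+\alpha_c\pr_c\big),
\]
which rearranges immediately. The $\sqrt{2\alpha_c\pr_c/3}$ bound is a Fenchel/Lagrange dual: concavity of $\phi$ in $u$ gives, for any $\lambda>0$, the linear envelope $\phi(u,\beta) \le \lambda u + \psi(\lambda,\beta)$ with Legendre conjugate $\psi(\lambda,\beta) = (1-\sqrt{\lambda\beta/\alpha_c})_+^{2}$; a direct substitution yields $\int_0^1 \psi\,d\beta = \alpha_c/(6\lambda)$ in the regime $\lambda \ge \alpha_c$, so $\ldet \le \lambda\pr_c + \alpha_c/(6\lambda)$, and choosing $\lambda = \sqrt{\alpha_c/(6\pr_c)}$ produces $\sqrt{2\alpha_c\pr_c/3}$. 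The minimizing $\lambda$ lies in the valid regime precisely when $\alpha_c\pr_c \le 1/6$; outside that window the Cauchy--Schwarz bound is already tighter, and when $\alpha_c\pr_c \ge 3/2$ the bound $\sqrt{2\alpha_c\pr_c/3} \ge 1 \ge \ldet$ is automatic.

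The main obstacle I expect is the bookkeeping in the opening change of variables --- keeping the orientation straight for the decreasing inverse $r_c^{-1}$ and verifying the layer-cake identity $\pr_c = \int u\,d\beta$ --- together with the Fenchel conjugate calculation that cleanly produces the constant $1/6$ after the substitution $t = \sqrt{\lambda\beta/\alpha_c}$. The remaining pieces (concavity chord, exchange rearrangement, and Cauchy--Schwarz) are each essentially one line.
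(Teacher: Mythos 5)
Your proposal is correct and arrives at all three bounds, but it takes a genuinely different route at the decisive step. You share the paper's reduction: everything is funneled through $g(\beta)=P_{x'\sim\pnc}\big(s_c(x')>r_c^{-1}(\beta)\big)$, with $R_c=\int_0^1 g$ and $\ldet=\int_0^1\frac{\alpha_c g(\beta)}{\beta+\alpha_c g(\beta)}\,d\beta$, so that the theorem becomes a constrained variational problem. The paper then solves that problem by an Euler--Lagrange stationarity computation (for the two bounds that become your upper bounds on $\ldet$) and an exchange argument (for the $\alpha_c\log(\cdot)$ bound); you instead use certified one-shot inequalities: a concavity/chord estimate followed by a rearrangement for the logarithmic bound, Cauchy--Schwarz for $1-\frac{8}{9(1+2\alpha_c R_c)}$, and Fenchel--Legendre duality for $\sqrt{2\alpha_c R_c/3}$. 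This buys something real: Euler--Lagrange only identifies stationary points and does not by itself certify a global optimum, whereas your Cauchy--Schwarz and dual bounds hold for every feasible $g$ by construction; your lower-bound derivation is the paper's exchange argument repackaged as a rearrangement, and both are tight at the same step function $\mathbf{1}[\beta\ge 1-R_c]$. The one loose end is the regime $1/6<\alpha_c R_c<3/2$, where your dual multiplier exits the window in which $\int\psi=\alpha_c/(6\lambda)$ and where $\sqrt{2\alpha_c R_c/3}<1$ is not automatic: since the theorem asserts \emph{both} upper bounds, you must actually verify the claim that Cauchy--Schwarz dominates there, i.e.
\begin{equation*}
1-\frac{8}{9(1+2\alpha_c R_c)}\;\le\;\sqrt{\tfrac{2}{3}\alpha_c R_c}\qquad\text{for }\alpha_c R_c\ge\tfrac{1}{6}.
\end{equation*}
This is true but not free --- the two sides osculate to second order at $\alpha_c R_c=1/6$ --- and after squaring and substituting $y=1+2\alpha_c R_c$ it reduces to $(3y-4)^3\ge 0$; include that verification to close the argument.
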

We provide a full proof in supplement and sketch out the proof strategy here.
We derive both bounds using a constrained variational problem.
For any detector $s_c$, data distributions $D_c$ and $D_{\lnot c}$, the average precision has the form
\begin{equation}
ap_c = \int_0^1 \frac{\beta}{\beta+\alpha g(\beta)} d\beta\label{ap_g},
\end{equation}
where $g(\beta) = P_{x^\prime\sim D_{\lnot c}}\left(s_c(x^\prime) > r_c^{-1}(\beta)\right) = P_{x^\prime\sim D_{\lnot c}}\left(r_c(s_c(x^\prime)) < \beta\right)$, since the recall is a strictly monotonously decreasing function.
At the same time the ranking loss reduces to  \begin{equation}
R_c = \int_0^1 g(\beta) d\beta = E_{x^\prime \sim D_{\lnot c}}\left[1-r_c(s_c(x^\prime))\right]\label{r_g}.
\end{equation}
For a fixed ranking error $R_c = \kappa$, we find a function $0\le g(\beta) \le 1$ that minimizes or maximizes the detection error $\ldet$ through variational optimization.
See supplement for more details.
See Figure~\ref{fig:bound_plot} for a visualization of the bounds.

Theorem~\ref{thm:rank_bound} clearly establishes the connection between the ranking and detection.
Unfortunately, the exact upper bound is hard to further simplify.
We instead chose a linear approximation $\ldet \approx m_c R_c$, for $\alpha_c\log\left(\frac{1+\alpha_c}{\alpha_c}\right) \le m_c \le \frac{\frac{1}{9}+2\alpha_c}{1+2\alpha_c}$.
Figure~\ref{fig:bound_plot} visualizes this linear approximation.
The linear approximation even bounds the detection error from above $\ldet \le m_c R_c + o$ with an appropriate offset $o$.
We denote this as $\ldet \lesssim m_c R_c$.

In the next section, we show how this ranking loss is bound from above with a margin-based classification problem, which we minimize in \ref{sec:ecm}.

\subsection{Ranking bounds}
\label{sec:rankingbound}

To connect the ranking loss to the generalization error, we first reduce ranking to binary classification.

\begin{theorem}[Binary error bound]
\label{binary_upper}
The ranking loss is bound from above by
\begin{align*}
\pr_c &\le \p_{x \sim \pc}\big(\sig \le t\big)+\p_{x\sim \pnc}\big(t<\sig\big),
\end{align*}
for an arbitrary threshold $t$.\label{thm:binary_bound}
\end{theorem}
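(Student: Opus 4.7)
The plan is a short union-bound argument that uses the threshold $t$ to separate the comparison $s_c(x) < s_c(x^\prime)$ into two threshold-crossing events.

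First I would observe the key containment of events: whenever $s_c(x) < s_c(x^\prime)$ holds for a positive $x$ and negative $x^\prime$, the threshold $t$ must fall ``between'' their scores in a particular sense. Concretely, if $s_c(x) > t$ then the inequality $s_c(x) < s_c(x^\prime)$ forces $s_c(x^\prime) > t$, so at least one of the two events $\{s_c(x) \le t\}$ or $\{s_c(x^\prime) > t\}$ must occur. Symbolically,
\begin{equation*}
\{s_c(x) < s_c(x^\prime)\} \;\subseteq\; \{s_c(x) \le t\} \,\cup\, \{s_c(x^\prime) > t\}.
\end{equation*}

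Next I would apply the union bound and use independence of $x \sim \pc$ and $x^\prime \sim \pnc$ (which is built into the product measure in Definition~\ref{pre_def}) to factor the two marginals:
\begin{equation*}
\pr_c = \p_{x,x^\prime \sim \pc \times \pnc}(s_c(x) < s_c(x^\prime)) \le \p_{x \sim \pc}(s_c(x) \le t) + \p_{x^\prime \sim \pnc}(t < s_c(x^\prime)),
\end{equation*}
which is exactly the claimed inequality. Since $t$ was arbitrary, the bound holds for every threshold, and one may later optimize over $t$ (or choose class-specific $t_c$) to tighten it.

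There is essentially no substantive obstacle here; the only thing worth checking carefully is the boundary case $s_c(x) = s_c(x^\prime)$, which is excluded from the strict inequality defining $\pr_c$ and is therefore irrelevant, and the case $s_c(x) = t$, which is correctly captured by the non-strict $\le$ on the positive side and the strict $<$ on the negative side. This clean split into a false-negative term (positives falling at or below $t$) and a false-positive term (negatives rising strictly above $t$) is exactly what sets up the reduction to margin-based binary classification used in the next subsection.
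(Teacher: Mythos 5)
Your proposal is correct and is essentially the paper's own argument: the event containment $\{s_c(x) < s_c(x^\prime)\} \subseteq \{s_c(x) \le t\} \cup \{t < s_c(x^\prime)\}$ plus the union bound is exactly the paper's pointwise indicator inequality $\I{a<b} \le \I{a \le t} + \I{t < b}$ followed by linearity of expectation over the product measure. No gap; the boundary-case remarks are a nice (if unnecessary) extra check.
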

\begin{proof}
For any indicator $\I{a < b} \le \I{a < t} + \I{t \le b}$.
Let's first rewrite ranking as expectations over indicator functions:
\begin{align*}
R_c &= E_{x\sim \pc}\left[E_{x^\prime\sim \pnc}\left[\I{s_c(x)<s_c(x^\prime)}\right]\right]\\
&\le E_{x\sim \pc}\left[E_{x^\prime\sim \pnc}\left[\I{s_c(x) \le t} + \I{t < s_c(x^\prime)}\right]\right]\\
&= E_{x\sim \pc}\left[\I{s_c(x) \le t}\right]+E_{x^\prime\sim \pnc}\left[\I{t < s_c(x^\prime)}\right].
\end{align*}
The last line uses the linearity of expectation.
\qed
\end{proof}

\begin{figure}[t]
\begin{center} 
\begin{subfigure}{.32\textwidth}
  \includegraphics[width=\linewidth,page=1]{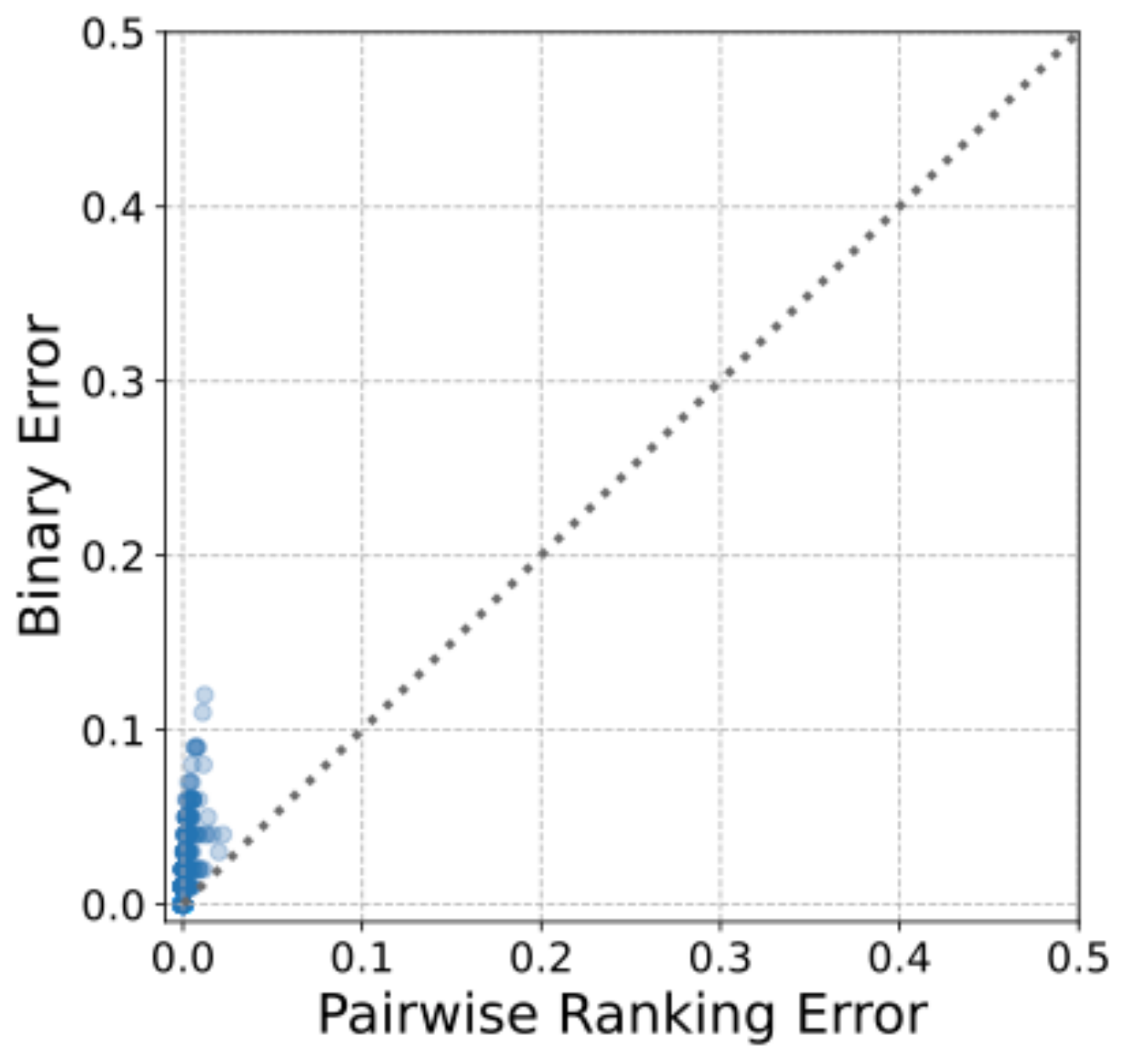} 
  \caption{Rare classes}
  \label{fig:rank_bound_rare}
\end{subfigure}
\hfill
\begin{subfigure}{.32\textwidth}
  \includegraphics[width=\linewidth,page=2]{rank_binary_small_cropped.pdf} \caption{Common classes}
  \label{fig:rank_bound_common}
\end{subfigure}
\hfill
\begin{subfigure}{.32\textwidth}
  \includegraphics[width=\linewidth,page=3]{rank_binary_small_cropped.pdf} \caption{Frequent classes}
  \label{fig:rank_bound_frequent}
\end{subfigure}
\end{center}
\caption{Visualization of the upper bound of Pairwise Ranking Error with respect to a binary classification error under an optimal threshold $t$. The blue dots correspond to actual binary errors of a detector.
We evaluate multiple checkpoints of the same detector on rare, common, and frequent classes of lvis v1.
Each point represents a checkpoint's performance in one class.
We compute classification and ranking errors over the training set which has low errors, especially for rare classes.
}
\label{fig:rank_bound_plot}
\end{figure}

Figure~\ref{fig:rank_bound_plot} visualizes this upper bound.
While any threshold $t$ leads to an upper bound to ranking.
We would like to optimize for the tightest upper bound $t$.
We do this by folding $t$ into the optimization.
In a deep network, this simply means optimizing for a bias term of the detector score $s_c(x)$.
For the remainder of the exposition, we assume $t$ is part of $s_c$ and use a detection threshold of $\frac{1}{2}$.

Next, lets us use Theorem~\ref{thm:margin-bound} to bound the classification error, and thus the detection objective, by an empirical bound

\begin{align}
\ldet \!\lesssim\! m_c \Bigg(\hat{\mathcal{L}}_{\gamma^+_c,c} \!+\! \hat{\mathcal{L}}_{\gamma^-_c,\lnot c}
\!+\! \frac{2}{\gamma^+_c}\sqrt{\frac{C(\mathcal{F})}{n_c}} \!+\! \frac{2}{\gamma^-_c}\sqrt{\frac{C(\mathcal{F})}{n_{\lnot c}}} \!+\! \epsilon(n_c) \!+\! \epsilon(n_{\lnot c})\Bigg),
\end{align}
where $\epsilon$ is a small constant that depends on the number of training samples $n_c$ and $n_{\lnot c}$.
Here, we use empirical foreground $\hat{\mathcal{L}}_{\gamma^+_c,c} = \frac{1}{n}\sum_{i=1}^{n} \I{y_i=c}\I{s_c(\hat x_i) \le \gamma_c^+}$ and background $\hat{\mathcal{L}}_{\gamma^-_c,\lnot c} = \frac{1}{n}\sum_{i=1}^{n} \I{y_i\ne c} \I{s_{\lnot c}(\hat x_i) \le \gamma_c^-}$ classification errors for detector $s_c$.
$\gamma^+_c$ and $\gamma^-_c$ are positive and negative margins respectively.

Under a separability assumption, the tightest margins take the form
\begin{align}
\gamma_c^+ = \frac{n_{\lnot c}^{1/4}}{n_c^{1/4}+n_{\lnot c}^{1/4}}\qquad
\gamma_c^- = \frac{n_{c}^{1/4}}{n_c^{1/4}+n_{\lnot c}^{1/4}}.
\label{eq:optimal_margin_eq}
\end{align}
See Cao et al.~\cite{ldam} or the supplement for a derivation of these margins.

We have now arrived at an upper bound of the detection error $\mathcal{L}^{\text{Det}} = \frac{1}{|C|}\sum_{c \in C}\ldet$ using an empirical margin-based classifier for each class $c$.
This margin-based objective takes the generalization error and any potential class imbalance into account.

In the next section, we derive a continuous loss function for this binary objective and optimize it in a deep-network-based object detection system.
Note that standard detector training is already classification-based, and our objective only introduces a margin and weight for each class.


\subsection{Effective Class-Margin Loss}
\label{sec:ecm}

Our goal is to minimize the empirical margin-based error
\begin{equation}
\hat{\mathcal{L}}_{\gamma^\pm_c\!,c} = \hat{\mathcal{L}}_{\gamma^+_c\!,c} + \hat{\mathcal{L}}_{\gamma^-_c\!,\lnot c} = \frac{1}{n}\sum_{i=1}^{n}\left( \I{y_i=c}\I{s_c(\hat x_i) \le \gamma_c^+} + \I{y_i\ne c}\I{s_{\lnot c} (\hat x_i) \le \gamma_c^-} \right)\label{eq:emp_cls}
\end{equation}
for a scoring function $s_c(x) \in [0,1]$.
A natural choice of scoring function is a sigmoid $s_c(x) = \frac{\exp(f(x))}{\exp(f(x))+\exp(-f(x))}$.

However, there is no natural equivalent to a margin-based binary cross-entropy (BCE) loss.
Regular binary cross-entropy optimizes a margin $s_c(x) = s_{\lnot c}(x) = \frac{1}{2}$ at $f(x)=0$, which does not conform to our margin-based loss.
We instead want to move this decision boundary to $s_c(x) = \gamma_c^+$ and $s_{\lnot c}(x) = \gamma_c^-$.

We achieve this with a surrogate \textbf{Effective Class-Margin Loss}:
\begin{align}
\mathcal{L}^{\text{ECM}}_c &= - \frac{1}{n}\sum_{i=1}^{n}m_c \left( 1_{[y=c]} \log (\hat s_c(x)) + 1_{[y\ne c]} \log (1-\hat s_c(x))\right).\label{eq:surrogate_loss}
\end{align}
The ECM loss optimizes a binary cross entropy on a surrogate scoring function
\begin{align}
\hat s_c(x) &= \frac{w_c^+ e^{f(x)_c}}{w_c^+ e^{f(x)_c} + w_c^- e^{-f(x)_c}}, \qquad w_c^\pm = (\gamma_c^\pm)^{-1}.\label{eq:surrogate_s}
\end{align}

This surrogate scoring function has the same properties as a sigmoid $\hat s_c \in [0, 1]$ and $\hat s_{\lnot c}(x) = 1 - \hat s_c(x)$.
However, its decision boundary $\hat s_c(x) = \hat s_{\lnot c}(x)$ lies at $f(x)=\frac{1}{2}(\log w_c^- - \log w_c^+)$.
In the original sigmoid scoring function $s$, this decision boundary corresponds to $s_c(x) = \gamma_c^+$ and $s_{\lnot c}(x) = \gamma_c^-$.
Hence, the \textbf{Effective Class-Margin Loss} minimizes the binary classification error under the margins specified by our empirical objective~\eqref{eq:emp_cls}. In Figure~\ref{fig:mw_plots}, we visualize the relationship between the negative-to-positive ratio $\alpha_c$ and the positive and negative class margins/weights. 

We use this ECM loss as a plug-in replacement to the standard binary cross entropy or softmax cross entropy used in object detection.

\begin{figure}[t]
\begin{center}
\begin{subfigure}{.24\textwidth}
  \includegraphics[width=\linewidth,page=1]{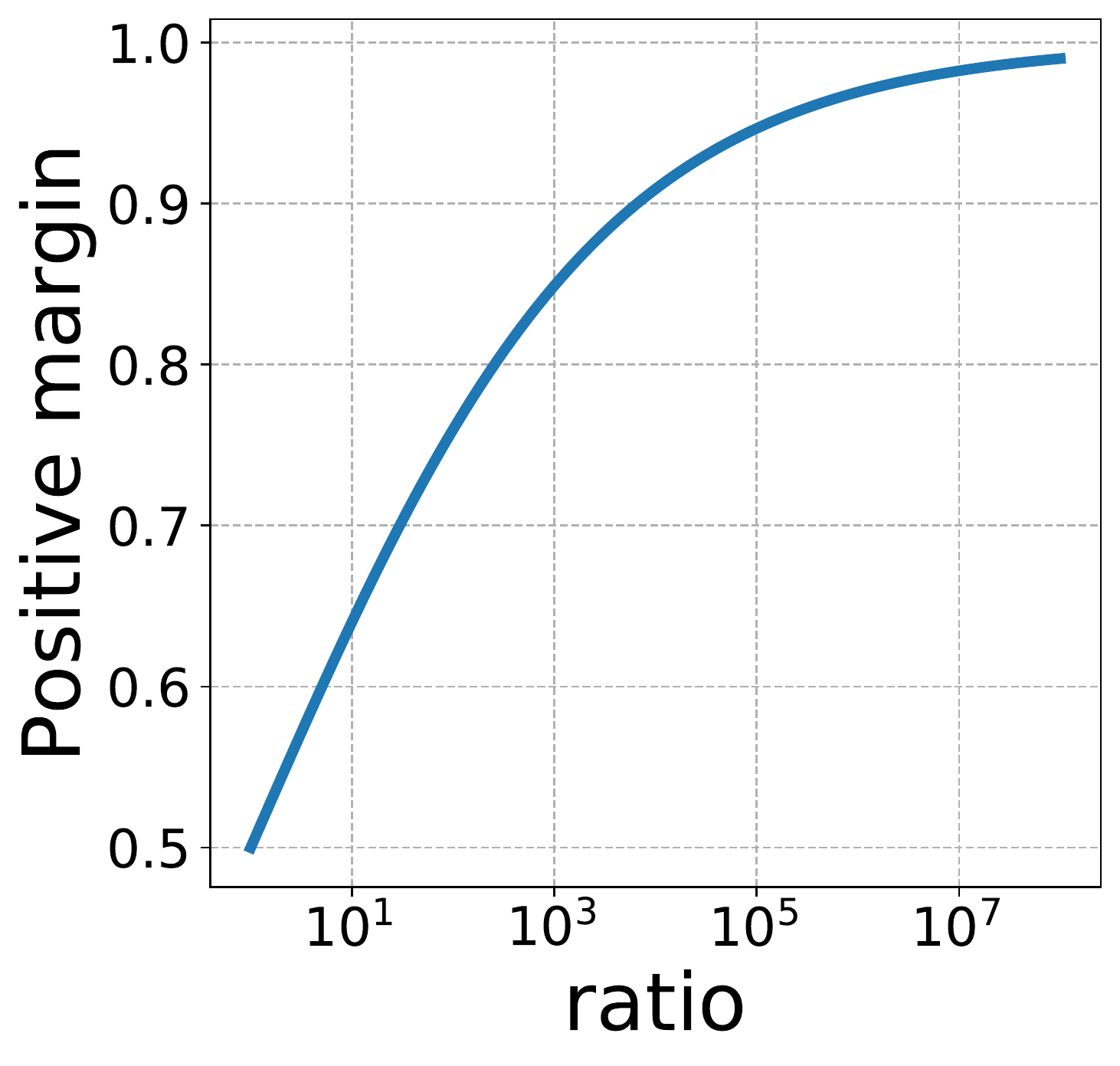}
\end{subfigure}
\hfill
\begin{subfigure}{.24\textwidth}
  \includegraphics[width=\linewidth,page=2]{pos_m.pdf}
\end{subfigure}
\hfill
\begin{subfigure}{.24\textwidth}
  \includegraphics[width=\linewidth,page=3]{pos_m.pdf}
\end{subfigure}
\hfill
\begin{subfigure}{.24\textwidth}
  \includegraphics[width=\linewidth,page=4]{pos_m.pdf}
\end{subfigure}
\end{center}
\caption{Visualization of the positive and negative margins and weights as a function of the negative-to-positive ratio $\alpha_c$. }
\label{fig:mw_plots}
\end{figure}



\begin{table}[t]
\begin{center}
\caption{LVIS v1 validation set results. We compare different methods on various frameworks and backbones on $2\times$ schedule. For Swin-B backbones~\cite{swin}, we use ImageNet-21k pretrained weight as initialization. We used the results of the original papers if available, and reproduced them from the official code otherwise. }
\label{tab:table_main}
\resizebox{\linewidth}{!}{
\begin{tabular}{cccccccc}
Framework & Backbone &  Method  & $\text{mAP}_{\text{segm}}$ & $\text{AP}_{\text{r}}$ & $\text{AP}_{\text{c}}$& $\text{AP}_{\text{f}}$ & $\text{mAP}_{\text{bbox}}$  \\
\toprule
\multirow{5}{*}{Mask R-CNN}&
\multirow{5}{*}{ResNet-50} & CE Loss & 22.7 & 10.6 & 21.8 & 29.1 & 23.3\\
& & Federated Loss~\cite{federated_loss} & 26.0 & 18.7 & 24.8 & 30.6 & 26.7  \\
& & Seesaw Loss~\cite{seesaw} & 26.7 & 18.0 & 26.5 & \textbf{32.4} & 27.3\\
& & LOCE~\cite{loce} & 26.6 & 18.5& 26.2 &30.7 & 27.4 \\
& & \ours  & \textbf{27.4} &\textbf{ 19.7} & \textbf{27.0} & 31.1 &\textbf{ 27.9} \\
\midrule
\multirow{8}{*}{Mask R-CNN}&
\multirow{8}{*}{ResNet-101} & CE Loss & 25.5 & 16.6 & 24.5 & 30.6 & 26.6\\
& & EQL v1~\cite{eqlv1}& 26.2 & 17.0 & 26.2 & 30.2  & 27.6 \\ 
& & BAGS~\cite{bags} &  25.8 & 16.5 & 25.7 & 30.1 & 26.5\\
& & EQL v2~\cite{eqlv2} & 27.2 & 20.6 & 25.9 & 31.4 & 27.9 \\
& & Federated Loss~\cite{federated_loss} & 27.9 & 20.9 & 26.8 & 32.3 & 28.8  \\
& & Seesaw Loss~\cite{seesaw} &28.1 & 20.0 & \textbf{28.0} & 31.8 & 28.9\\
& & LOCE~\cite{loce} & 28.0 & 19.5 & 27.8 & 32.0 & 29.0 \\
& & \ours & \textbf{28.7} & \textbf{21.9 }& 27.9 &\textbf{ 32.3 }& \textbf{29.4}\\
\midrule
\multirow{8}{*}{Cascade Mask R-CNN}&
\multirow{8}{*}{ResNet-101} & CE Loss  & 27.0 & 16.6 & 26.7 & 32.0 & 30.3\\
& & EQL v1~\cite{eqlv1} &  27.1 & 17.0 & 27.2 & 31.4 & 30.4 \\ 
& & De-confound TDE~\cite{deconfound} & 27.1 & 16.0 & 26.9 & 32.1 & 30.0 \\
& & BAGS~\cite{bags} & 27.0 & 16.9 & 26.9 & 31.7  & 30.2\\
& & Federated Loss~\cite{bags} & 28.6 & 20.3 & 27.5 & 33.4 & 31.8 \\
& & DisAlign~\cite{disalign} & 28.9 & 18.0 & 29.3 & 33.3 & 32.7\\
& & Seesaw Loss~\cite{seesaw} & 30.1 &\textbf{ 21.4} & 30.0 & 33.9 & 32.8\\
& & \ours & \textbf{30.6 }& 19.7 & \textbf{30.7} &\textbf{ 35.0} & \textbf{33.4} \\
\midrule
\multirow{2}{*}{Cascade Mask R-CNN}&
\multirow{2}{*}{Swin-B} & Seesaw Loss~\cite{seesaw}  &  38.7 &\textbf{34.3} & 39.6 & 39.6 & 42.8 \\
& & \ours & \textbf{39.7 }& 33.5 & \textbf{40.6} &\textbf{ 41.4 }& \textbf{43.6} \\
\bottomrule
\end{tabular}
}
\end{center}

\end{table}

\section{Experiments}
\subsection{Experimental Settings}
\myparagraph{Datasets.}
We evaluate our method on LVIS v1.0~\cite{lvis} and OpenImages datasets.
LVIS v1.0 is large-scale object detection and instance segmentation dataset.
It includes 1203 object categories that follow the extreme long-tail distribution.
Object categories in LVIS dataset are divided into three groups by frequency: \textit{frequent}, \textit{common}, and \textit{rare}.
Categories that appear in less than 10 images are considered rare, more than 10 but less than 100 are common, and others are frequent.
There are about 1.3 M instances in the dataset over 120k images (100k train, 20k validation split).
The OpenImages Object Detection dataset contains 500 object categories over 1.7 M images in a similar long-tail.

\myparagraph{Evaluation. } We evaluate all models using both the conventional mAP evaluation metric and the newly proposed $\text{mAP}_{\text{fixed}}$ metric~\cite{ap_fixed}.
$\text{mAP}$ measures the mean average precision over IoU thresholds from 0.5 to 0.95~\cite{coco} over 300 detections per image.
$\text{mAP}_{\text{fixed}}$~\cite{ap_fixed} has no limit for detections per image, but instead limits the number of detections per class over the entire dataset to 10k.
Due to memory limitations, we also limit detections per image to 800 per class.
We further evaluate the boundary IoU $\text{mAP}_{\text{boundary}}$~\cite{boundary_iou}, the evaluation metric in this year's LVIS Challenge.
For OpenImages, we follow the evaluation protocol of Zhou~\etal~\cite{unidet} and measure mAP@0.5.


\subsection{Implementation Details.} 
Our implementation is based on Detectron2~\cite{detectron2} and MMDetection~\cite{mmdet}, two most popular open-source libraries for object detection tasks.
We train both Mask R-CNN~\cite{maskrcnn} and Cascade Mask R-CNN~\cite{cascadercnn} with various backbones: ResNet-50 and ResNet-101~\cite{resnet} with Feature Pyramid Network~\cite{fpn}, and the Swin Transformer~\cite{swin}.
We use a number of popular one-stage detectors: FCOS~\cite{fcos}, ATSS~\cite{atss} and VarifocalNet~\cite{varifocal}.
We largely follow the standard COCO and LVIS setup and hyperparameters for all models.
For OpenImages, we follow the setup of Zhou~\etal~\cite{unidet}.
More details are in the supplementary material.

\myparagraph{ECM Loss.} 
\label{paragraph:ecm_detail}
Our ECM Loss is a plug-in replacement to the sigmoid function used in most detectors.
Notably, ECM Loss does not require any hyper-parameter.
We use the training set from each dataset to measure $\alpha_c, n_c, n_{\lnot c}$ of each class.

\begin{table}[t]
\begin{center}
\caption{Comparison on LVIS v1 validation set. Models are trained with Mask R-CNN with ResNet-50 backbone on 1x schedule.
Numbers with $^*$ use a different implementation~\cite{ap_fixed}.
$\text{mAP}^{\text{fixed}}_{\text{boundary}}$ and $\text{mAP}^{\text{fixed}}_{\text{bbox}}$ refer to the new LVIS Challenge evaluation metrics~\cite{boundary_iou,ap_fixed}.}
\label{tab:table_1x}
\resizebox{\linewidth}{!}{
\begin{tabular}{lccccccc}
Method  & $\text{mAP}_{\text{segm}}$ & $\text{AP}_{\text{r}}$ & $\text{AP}_{\text{c}}$& $\text{AP}_{\text{f}}$ & $\text{mAP}_{\text{bbox}}$ &$\text{mAP}^{\text{fixed}}_{\text{boundary}}$ &$\text{mAP}^{\text{fixed}}_{\text{bbox}}$  \\
\toprule
RFS+CE Loss & 21.7 & 9.5 & 21.1 & 27.7 & 22.2 & 18.3& 25.7 \\
LWS~\cite{decouple} & 17.0 & 2.0 & 13.5 & 27.4 & 17.5 & -&-\\
cRT~\cite{decouple} & 22.1 & 11.9 & 20.2 & 29.0 & 22.2 &-&-\\
BAGS~\cite{bags} & 23.1 & 13.1 & 22.5 & 28.2 & 23.7&-&26.2$^*$\\
EQL v2~\cite{eqlv2} & 23.9 & 12.5 & 22.7 & 30.4 & 24.0 & 20.3 & 25.9 \\ 
Federated Loss~\cite{federated_loss}  &23.9 & 15.8 & 23.3 & 30.7 & 24.9&-&26.3$^*$\\
Seesaw Loss~\cite{seesaw}  & 25.2 & 16.4 & 24.4 &\textbf{ 30.8} & 25.4 & 19.8 & 26.5 \\
\midrule
\ours & \textbf{26.3} & \textbf{19.5}& \textbf{26.0} & 29.8 & \textbf{26.7} & \textbf{21.4} & \textbf{27.4 }\\
\bottomrule
\end{tabular}
}
\end{center}
\end{table}

\subsection{Experimental Results}
Table~\ref{tab:table_main} compares our approach on frameworks and backbones using a standard $2\times$ training schedule.
We compare different long-tail loss functions under different experimental setups.
With Mask R-CNN on a ResNet-50 backbone, our ECM Loss outperforms all alternative losses by $0.7$ $\text{mAP}_{\text{segm}}$ and $0.5$ $\text{mAP}_{\text{bbox}}$.
With Mask R-CNN on a ResNet-101 backbone, our ECM loss outperforms alternatives with a $0.6$ $\text{mAP}_{\text{segm}}$ and $0.4$ $\text{mAP}_{\text{bbox}}$.
The results also hold up in the more advanced Cascade R-CNN framework~\cite{cascadercnn} with ResNet-101 and Swin-B backends.
Here the gains are $0.5$ $\text{mAP}_{\text{segm}}$ and $0.6$ $\text{mAP}_{\text{bbox}}$ for ResNet-101, and $1$ $\text{mAP}_{\text{segm}}$ and $0.8$ $\text{mAP}_{\text{bbox}}$ for Swin-B.
The overall gains over a simple cross-entropy baseline are 3-5 mAP throughout all settings.
The consistent improvement in accuracy throughout all settings highlights the empirical efficacy of our method, in addition to the grounding in learning theory.


For reference, Table~\ref{tab:table_1x} compares our method on Mask R-CNN with ResNet-50 backbone on $1\times$ schedule.
Our ECM Loss outperforms all prior approaches by 1.1 $\text{mAP}_{\text{segm}}$ and 1.3  $\text{mAP}_{\text{bbox}}$.
Our method achieves a 10 mAP gain over cross-entropy loss baseline, and 3.1 $\text{AP}_{\text{r}}$ gain over the state-of-the-art.
Our method shows similar gains on the new evaluation metrics $\text{mAP}^{\text{fixed}}_{\text{boundary}}$ and $\text{mAP}^{\text{fixed}}_{\text{bbox}}$ whereas prior methods tend show a more moderate improvement on new metrics.
This improvement is particularly noteworthy, as our approach uses no additional hyperparameters, and is competitive out of the box.

\begin{table}[t]
\begin{center}
\caption{One-stage object detection results on LVIS v1 validation set. We compare popular one-stage detectors with ResNet-50 and ResNet-101 backbones, on 1x schedule. }
\label{tab:table_1stage_1x}
\resizebox{0.8\linewidth}{!}{
\begin{tabular}{ccccccc}
Framework & Backbone & Method  & $\text{AP}_{\text{r}}$ & $\text{AP}_{\text{c}}$& $\text{AP}_{\text{f}}$ & $\text{mAP}_{\text{bbox}}$  \\
\toprule
\multirow{2}{*}{FCOS}&
\multirow{2}{*}{ResNet-50} & Focal Loss~\cite{focal} & 11.2 & 21.0& \textbf{27.8} & 22.0\\
& & \ours  &  \textbf{14.5} & \textbf{22.7 }& 27.6 & \textbf{23.2}\\
\midrule
\multirow{2}{*}{FCOS}&
\multirow{2}{*}{ResNet-101} & Focal Loss~\cite{focal} & 14.1 & 22.6 & \textbf{29.8} & 24.0 \\
& & \ours  & \textbf{17.2} &\textbf{ 24.2} & 29.6 & \textbf{25.1} \\
\midrule
\multirow{2}{*}{ATSS}&
\multirow{2}{*}{ResNet-50} & Focal Loss~\cite{focal} & 8.6 & 20.5 & \textbf{29.8} & 22.1\\
& & \ours  & \textbf{15.8} & \textbf{23.5} & 29.5 & \textbf{24.5}  \\
\midrule
\multirow{2}{*}{ATSS}&
\multirow{2}{*}{ResNet-101} & Focal Loss~\cite{focal} & 12.9 & 24.0 & \textbf{31.9 }& 25.2 \\
& & \ours  & \textbf{17.7 }&\textbf{ 25.6 }& 31.5 & \textbf{26.5} \\
\midrule
\multirow{2}{*}{VarifocalNet}& 
\multirow{2}{*}{ResNet-50} & Varifocal Loss~\cite{varifocal} & 14.2 & 23.6 & \textbf{30.7} & 24.8\\
& & \ours  & \textbf{17.1 } & \textbf{25.5} & 29.7 & \textbf{25.7 }\\
\bottomrule
\end{tabular}
}
\end{center}
\end{table}

Table~\ref{tab:table_1stage_1x} compares our ECM Loss with baseline losses on FCOS, ATSS and VarifocalNet, trained with ResNet-50 and ResNet-101 backbones on 1x schedule.
The ECM Loss shows consistent gains over Focal Loss and its variants.
With FCOS, ECM improves box mAP 1.2 and 1.1 points, respectively, for ResNet-50 and ResNet-101 backbones.
With ATSS, ECM Loss improves Focal Loss {7.2} and {4.8} points on $\text{AP}_{\text{r}}$, and {2.4} and {1.3} points mAP, respectively.
We further test on VarifocalNet, a recently proposed one-stage detector, and show similar advantage using the ResNet-50 backbone.
Our ECM loss consistently improves the overall performance of a one-stage detector, especially in rare classes.
It thus serves as a true plug-in replacement to the standard cross-entropy or focal losses.
\begin{table}[t]
\begin{center}
\caption{One-stage object detection results on LVIS v1 validation set. We compare different methods with ResNet-50 backbone on 2x schedule.}
\label{tab:table_1stage_2x}
\resizebox{0.7\linewidth}{!}{
\begin{tabular}{ccccccc}
Framework & Backbone & Method  & $\text{AP}_{\text{r}}$ & $\text{AP}_{\text{c}}$& $\text{AP}_{\text{f}}$ & $\text{mAP}_{\text{bbox}}$  \\
\toprule
\multirow{2}{*}{FCOS}&
\multirow{2}{*}{ResNet-50} & Focal Loss & 12.0 & 22.9 & \textbf{29.5} & 23.5 \\
& & \ours  &\textbf{14.7} & \textbf{23.0} & \textbf{29.5} &\textbf{24.4}\\
\midrule
\multirow{2}{*}{ATSS}&
\multirow{2}{*}{ResNet-50} & Focal Loss & 14.5&24.3 &\textbf{31.8}&25.6 \\
& & \ours  &  \textbf{16.6} &\textbf{25.2}&31.3 & \textbf{26.1}\\
\bottomrule
\end{tabular}
}
\end{center}
\end{table}
Table~\ref{tab:table_1stage_2x} further analyze our ECM Loss with Focal Loss on FCOS and ATSS trained with ResNet-50 backbone on 2x schedule. Our ECM maintains improvement of 0.9 point mAP for FCOS, and 0.5 point mAP for ATSS. For $\text{AP}_{\text{r}}$, both methods consistently improve 2.7 and 2.1 points, respectively.

\begin{table}[t]
\begin{center}
\caption{Comparisons of ECM Loss on OpenImages dataset following the evaluation protocol of Zhou~\etal~\cite{unidet}. We compare using a Cascade R-CNN with ResNet-50 backbone.}
\label{tab:table_oid}
\resizebox{0.7\linewidth}{!}{
\begin{tabular}{ccccccc}
Framework & Backbone & Method & Schedule & $\text{mAP}$  \\
\toprule
\multirow{2}{*}{Cascade R-CNN}&
\multirow{2}{*}{ResNet-50} & EQL + Hier.~\cite{eqlv1,unidet} & 2x & 64.6 \\
& & ECM Loss & 2x & \textbf{65.8} \\
\bottomrule
\end{tabular}
}
\end{center}
\end{table}
In Table~\ref{tab:table_oid}, we compare ECM Loss with a variant of Equalization Loss on the class hierarchy of Zhou~\etal~\cite{unidet}.
Although OpenImages have a long-tail distribution of classes, the number of classes and the associated prior probabilities are very different.
Nevertheless, ECM Loss improves over the baseline for 1.2 mAP. 
This result confirms the generality of our method. 



For all our experiments, the class frequencies were measured directly from the annotation set of LVIS v1 training dataset.
Note that for each class, the negative sample not only includes other foreground classes but also the background class.
However, the prior probability for background class is not defined apriori from the dataset itself since it solely depends on the particular detection framework of choice.
Hence, we measure the background frequency for each detector of choice and factor it into the final derivation of overall class frequencies.
This can be done within the first few iterations during training.
We then compute the effective class-margins with the derived optimal solution in Eqn.~\eqref{eq:optimal_margin_eq} and finally define the surrogate scoring function~\eqref{eq:surrogate_s}. 

\section{Conclusion}
In this paper, we tackle the long-tail object detection problem using a statistical approach.
We connect the training objective and the detection evaluation objective in the form of margin theory.
We show how a probabilistic version of average precision is optimized using a ranking and then margin-based binary classification problem.
We present a novel loss function, called Effective Class-Margin (ECM) Loss, to optimize the margin-based classification problem.
This ECM loss serves as a plug-in replacement to standard cross-entropy-based losses across various detection frameworks, backbones, and detector designs.
The ECM loss consistently improves the performance of the detector in a long-tail setting.
The loss is simple and hyperparameter-free.

\myparagraph{Acknowledgments.} This material is in part based upon work supported by the National Science Foundation under Grant No. IIS-1845485 and IIS-2006820.

\clearpage

\bibliographystyle{splncs04}
\bibliography{main}

\newpage

\appendix
\section{Pairwise Ranking Error}
In this section, we will prove the second equality of Definition~\ref{pre_def}. 
\begin{align*}
    R_c
&=P_{x^\prime\sim D_{\lnot c},x\sim D_{c}}\left(s_x(x) < s_c(x^\prime))\right)\\
&= 1-E_{x^\prime\sim D_{\lnot c}}\left[P_{x\sim D_{c}}\left(s_x(x) > s_c(x^\prime))\right)\right]\\
&=E_{x^\prime\sim D_{\lnot c}}\left[1-r_c(s_c(x^\prime))\right]\\
&=\int^1_0 \underbrace{P_{x^\prime\sim D_{\lnot c}}(r_c(s_c(x^\prime)) < \beta)}_{=g(\beta)}\mathrm{d}\beta = \tau.
\end{align*}
where the definition of $g$ is
$$
g(\beta) = P_{x^\prime\sim D_{\lnot c}}\left(s_c(x^\prime) > r_c^{-1}(\beta)\right) = P_{x^\prime\sim D_{\lnot c}}\left(r_c(s_c(x^\prime)) \boldsymbol{<} \beta\right).
$$
The above derivation connects the ranking error to $g$ and the recall.

\section{AP - Pairwise Ranking Error Bound}
In this section, we will prove Theorem~\ref{thm:rank_bound}.
We first derive the and lower bounds to the variational objective $\int^1_0 \frac{x}{x+\alpha g(x)} \mathrm{d}x$ under constraint $\int^1_0 g(x)\mathrm{d}x = \tau$ for a function $g(x) \ge 0$.
The AP bounds then directly reduce to the variational objective.

\begin{lemma}
\label{eq:varational_lower_bound}
Consider the following variational problem
\begin{align*}
\mathrm{minimize}_g &\int^1_0 \frac{x}{x+\alpha g(x)} \mathrm{d}x \\
\mathrm{subject\:to} &\int^1_0 g(x)\mathrm{d}x = \tau \\
&g(x) \ge 0 
\end{align*}
The solution to this problem is 
\begin{align*}
\max\Bigg(1-\sqrt{\frac{2}{3}\alpha \tau}, \frac{4}{9}\frac{1}{\frac{1}{2}+\alpha \tau}\Bigg)
\end{align*}
\begin{proof}
Consider the associated Euler-Lagrangian equation:
\begin{align*}
L(x, v(x), \lambda) = &\int^1_0 \frac{x}{x+\alpha v(x)^2}\mathrm{d}x + \lambda \Bigg(\int^1_0 v(x)^2\mathrm{d}x - \tau \Bigg)
\end{align*}
where $g(x)=v(x)^2$ for the non-negativity constraint. To solve for minima
\begin{align*}
\frac{d}{d v(x)}L(x, v(x), \lambda) &= - \frac{\alpha x v(x)}{(x+\alpha v(x)^2)^2} + \lambda v(x) = 0\\
v(x)\alpha x &= \lambda v(x)(x+\alpha v(x)^2)^2\\
\Longrightarrow v(x) &= 0 \quad \text{or}\quad x + \alpha v(x)^2 = \sqrt{\frac{x\alpha}{\lambda}}=\sqrt{x}\sqrt{\frac{\alpha}{\lambda}}\\
\Longrightarrow v(x)^2 &= \max \Bigg(0, \sqrt{\frac{x}{\alpha \lambda}}-\frac{x}{\alpha}\Bigg)= 1_{[x \le \frac{\alpha}{\lambda}]}\Bigg( \sqrt{\frac{x}{\alpha \lambda}}-\frac{x}{\alpha}\Bigg)\\
\Longrightarrow \int^1_0\alpha v(x)^2 \mathrm{d}x &= \alpha \tau =\int^1_0 1_{[x \le \frac{\alpha}{\lambda}]}\Bigg( \sqrt{\frac{\alpha x}{ \lambda}}-x\Bigg)\mathrm{d}x\\
&= \int^\kappa_0\Bigg( \sqrt{\frac{\alpha x}{ \lambda}}-x\Bigg)\mathrm{d}x = \frac{2}{3}\kappa ^{\frac{3}{2}}\sqrt{\frac{\alpha}{\lambda}}-\frac{\kappa^2}{2}
\end{align*}
where $\kappa = \min(1, \frac{\alpha}{\lambda})$. For $\kappa = 1$:
\begin{align*}
 \alpha \tau &= \frac{2}{3}\sqrt{\frac{\alpha}{\lambda}} - \frac{1}{2} \Longrightarrow \sqrt{\frac{\alpha}{\lambda}} = \frac{3}{2}\bigg(\alpha \tau +\frac{1}{2}\bigg)\\
 \Longrightarrow x + \alpha v(x)^2 &= \sqrt{x}\sqrt{\frac{\alpha}{\lambda}}=\sqrt{x}\frac{3}{2}\bigg(\alpha\tau +\frac{1}{2}\bigg)\\
 \Longrightarrow \int^1_0 \frac{x}{x+\alpha v(x)^2}\mathrm{d}x &= \int^1_0 \frac{x}{\sqrt{x}\frac{3}{2}(\alpha \tau + \frac{1}{2})}\mathrm{d}x=\int^1_0 \sqrt{x}\mathrm{d}x \frac{1}{\frac{3}{2}(\frac{1}{2} + \alpha \tau)} \\
 &= \frac{4}{9}\frac{1}{\frac{1}{2}+\alpha \tau}
\end{align*}
For $\kappa < 1$:
\begin{align*}
\alpha \tau
= \frac{2}{3} \bigg(\frac{\alpha}{\lambda}\bigg)^{\frac{3}{2}}\sqrt{\frac{\alpha}{\lambda}} - \frac{1}{2}\bigg(\frac{\alpha}{\lambda}\bigg)^2 &= \frac{1}{6}\bigg(\frac{\alpha}{\lambda}\bigg)^2\Longrightarrow \lambda = \frac{1}{6}\sqrt{\frac{\alpha}{\tau}}\\
\Longrightarrow \int^1_0 \frac{x}{x+\alpha v(x)^2}\mathrm{d}x &= \int^{\kappa}_0 \frac{x}{\sqrt{\frac{\alpha x}{\lambda}}}\mathrm{d}x + \int^1_{\kappa} \frac{x}{x + 0} \mathrm{d}x\\
&= \sqrt{\frac{\lambda}{\alpha}} \int^{\kappa}_0 \sqrt{x}\mathrm{d}x + \int^1_{\kappa} \mathrm{d}x\\
&= \frac{2}{3}\sqrt{\frac{\lambda}{\alpha}} \kappa^{\frac{3}{2}} + 1 - \kappa\\
&=\frac{2}{3}\sqrt{\frac{\lambda}{\alpha}} \bigg(\frac{\alpha}{\lambda}\bigg)^{\frac{3}{2}} + 1 - \frac{\alpha}{\lambda}\\
&=1-\frac{1}{3}\frac{\alpha}{\lambda}\\
&=1-\sqrt{\frac{2}{3}\alpha\tau}
\end{align*}
Each case yields on lower bound, hence the combined lower bound is
$$
\max\Bigg(1-\sqrt{\frac{2}{3}\alpha \tau}, \frac{4}{9}\frac{1}{\frac{1}{2}+\alpha \tau}\Bigg)
$$
\qed
\end{proof}
\end{lemma}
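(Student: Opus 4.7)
My plan is to treat this as a convex optimization problem in the function $g$. The integrand $\frac{x}{x+\alpha g}$ is convex in $g$ (its second derivative equals $\frac{2\alpha^{2}x}{(x+\alpha g)^{3}} > 0$), the equality constraint is linear, and the pointwise constraint $g \ge 0$ is convex, so the KKT conditions characterize the unique global minimum. There is no need to pass through the $g = v^{2}$ substitution used in the author's sketch; it is just a convenient way to bake in non-negativity, but the resulting stationary equation coincides with the KKT conditions.

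First I would form the Lagrangian $\mathcal{L}[g] = \int_{0}^{1}\frac{x}{x+\alpha g(x)}\,dx + \lambda\left(\int_{0}^{1} g(x)\,dx - \tau\right) - \int_{0}^{1}\mu(x)g(x)\,dx$ and write pointwise stationarity $-\frac{\alpha x}{(x+\alpha g(x))^{2}} + \lambda - \mu(x) = 0$ together with complementary slackness $\mu(x)g(x)=0$. On the active set $\{g(x) > 0\}$ this gives $x+\alpha g(x) = \sqrt{\alpha x/\lambda}$, equivalently $g(x) = \sqrt{x/(\alpha\lambda)} - x/\alpha$, which is non-negative exactly for $x \le \alpha/\lambda$. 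On $\{g(x)=0\}$, dual feasibility $\mu(x) \ge 0$ forces $x \ge \alpha/\lambda$. Setting $\kappa = \min(1, \alpha/\lambda)$, the optimizer is $g^{*}(x) = 1_{[x \le \kappa]}\left(\sqrt{x/(\alpha\lambda)} - x/\alpha\right)$.

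Next I would pin down $\lambda$ via the constraint $\int g^{*} = \tau$, which evaluates to $\alpha\tau = \frac{2}{3}\kappa^{3/2}\sqrt{\alpha/\lambda} - \kappa^{2}/2$. This splits into two cases depending on whether the active set is all of $[0,1]$ or a proper subinterval. In the case $\kappa = 1$, solving yields $\sqrt{\alpha/\lambda} = \frac{3}{2}(\alpha\tau + \tfrac{1}{2})$; substituting $x+\alpha g^{*}(x)=\sqrt{\alpha x/\lambda}$ into the objective reduces it to $\int_{0}^{1}\sqrt{x}\sqrt{\lambda/\alpha}\,dx = \frac{4}{9(\frac{1}{2}+\alpha\tau)}$. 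In the case $\kappa = \alpha/\lambda < 1$, the constraint simplifies to $\alpha\tau = \frac{1}{6}(\alpha/\lambda)^{2}$, so $\alpha/\lambda = \sqrt{6\alpha\tau}$; splitting the objective into the active subinterval $[0,\kappa]$ (contributing $\frac{2}{3}(\alpha/\lambda)$) and the inactive tail $[\kappa,1]$ where the integrand is identically $1$ (contributing $1-\kappa$) gives $1 - \frac{1}{3}(\alpha/\lambda) = 1-\sqrt{\frac{2}{3}\alpha\tau}$.

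The main obstacle is packaging these two branch expressions into the single max-form claimed by the lemma. The two regimes are separated by the threshold $\alpha\tau = 1/6$, at which both expressions equal $2/3$. A short monotonicity check shows that for $\alpha\tau \le 1/6$ (the $\kappa < 1$ regime) the expression $1-\sqrt{\frac{2}{3}\alpha\tau}$ dominates $\frac{4}{9(\frac{1}{2}+\alpha\tau)}$, while for $\alpha\tau \ge 1/6$ the inequality reverses. Hence in each regime the correct branch is the larger of the two, so taking the maximum over both expressions uniformly selects the valid KKT solution across all $\tau$. Combined with the convexity observation (which promotes the KKT stationary point to the global minimum), this proves the claimed formula.
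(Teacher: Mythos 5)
Your proposal is correct and follows the same computational path as the paper: the same optimizer $g^*(x) = 1_{[x \le \kappa]}\bigl(\sqrt{x/(\alpha\lambda)} - x/\alpha\bigr)$, the same case split on $\kappa = \min(1,\alpha/\lambda)$, and the same two branch values. Replacing the paper's $g = v^2$ substitution with an explicit complementary-slackness multiplier $\mu(x)$ is a cosmetic difference, but you do fill two genuine gaps that the paper's argument leaves open. First, you observe that $\frac{x}{x+\alpha g}$ is pointwise convex in $g$ (second derivative $\frac{2\alpha^2 x}{(x+\alpha g)^3} > 0$), so the functional is convex over the affine-plus-cone feasible set and the stationary point is the global minimum; the paper only solves the Euler--Lagrange equation and never rules out a saddle or maximum. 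Second, you justify the $\max$ form: the paper asserts ``each case yields a lower bound, hence the combined lower bound is the max,'' which as stated is misleading since each branch is the valid KKT solution only in its own regime ($\alpha\tau \le 1/6$ for $\kappa<1$, $\alpha\tau \ge 1/6$ for $\kappa=1$). Your check that the two expressions cross exactly at $\alpha\tau = 1/6$ with common value $2/3$, and that in each regime the valid branch is the larger one, is what actually makes the single closed-form $\max\bigl(1-\sqrt{\tfrac{2}{3}\alpha\tau},\,\tfrac{4}{9}\tfrac{1}{\frac{1}{2}+\alpha\tau}\bigr)$ legitimate. Both additions are correct and strengthen the lemma's proof.
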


\noindent Bonus: The two bounds meet at $\frac{2}{3}$:
\begin{align*}
\frac{4}{9}\frac{1}{\frac{1}{2}+\alpha\tau} = 1-\sqrt{\frac{2}{3}\alpha\tau} = \frac{2}{3} \quad\text{for}\quad \alpha \tau = \frac{1}{6}.
\end{align*}

\begin{lemma}
\label{eq:varational_upper_bound}
Consider the following variational problem
\begin{align*}
\mathrm{maximize}_g &\int^1_0 \frac{x}{x+\alpha g(x)} \mathrm{d}x \\
\mathrm{subject\:to} &\int^1_0 g(x)\mathrm{d}x = \tau \\
&g(x) \ge 0 
\end{align*}
\begin{proof}
First, let us re-formulate the problem as following
\begin{align*}
\mathrm{minimize}_g &\int^1_0 \frac{\alpha g(x)}{x+\alpha g(x)} \mathrm{d}x \\
\mathrm{subject\:to} &\int^1_0 g(x)\mathrm{d}x = \tau \\
&g(x) \ge 0 
\end{align*}
Without the equality constraint $\int^1_0g(x)\mathrm{d}x=\tau$, the objective is minimized at $g(x)=0$ for all $x\in [0, 1]$.
The equality constraint assigns certain values $g(x)$ a positive mass.
The optimal solution will assign $g(x)=0$ for $x < 1-\tau$, and $g(x)=1$ for $x \ge 1-\tau$.
To see this, consider a value $g(x_1)=\frac{\epsilon}{\alpha}$ for $x_1 < 1-\tau$ and one or move values $g(x_2) \le 1-\frac{\epsilon}{\alpha}$ for $x_2 > 1-\tau$.
Here, a solution $\hat g(x_1) = 0$ and $\hat g(x_2) = g(x_2)+\frac{\epsilon}{\alpha}$ has a lower objective
\begin{align*}
\Delta &=\left(\frac{\alpha g(x_1)}{x_1+\alpha g(x_1)} + \frac{\alpha g(x_2)}{x_2+\alpha g(x_2)}\right) - \left(\frac{\alpha \hat g(x_1)}{x_1+\alpha \hat g(x_1)} + \frac{\alpha \hat g(x_2)}{x_2+\alpha \hat g(x_2)}\right)\\
&=\left(\frac{\alpha g(x_1)}{x_1+\alpha g(x_1)} + \frac{\alpha g(x_2)}{x_2+\alpha g(x_2)}\right) - \frac{\alpha g(x_2) + \epsilon}{x_2+\alpha g(x_2) + \epsilon}\\
&=\frac{\epsilon}{x_1+\epsilon} - \frac{\epsilon x_2}{(x_2+\alpha g(x_2))(x_2+\alpha g(x_2)+\epsilon)} > 0
\end{align*}
Here $\Delta > 0$ and the new objective is lower since $x_2+\alpha g(x_2) > x_2$ and $x_2>x_1$ thus $(x_2+\alpha g(x_2))(x_2+\alpha g(x_2)+\epsilon) > x_2(x_1+\epsilon)$.

Thus the zero-mass region should be where $x$ is low as lower $x$ increases the objective.
Hence, the optimality will happen when $g(x)=0$ for $x\in [0,1-\tau]$, and $g(x)=1$ for $x\in[1-\tau,1 ]$. Thus: 
\begin{align*}
\int^1_{1-\tau} \frac{\alpha}{\alpha + x}\mathrm{d}x &= \alpha \int^1_{1-\tau}\frac{1}{\alpha+x}\mathrm{d}x = \alpha (\log (1+\alpha) - \log(1-\tau+\alpha))\\
&= -\alpha \log \Bigg(1-\frac{\tau}{1+\alpha}\Bigg)\\
\Longrightarrow \max_g \int^1_0 \frac{x}{x+\alpha g(x)}\mathrm{d}x &= 1 + \alpha \log \Bigg(1-\frac{\tau}{1+\alpha}\Bigg)
\end{align*}
which concludes the proof.\qed 
\end{proof}
\end{lemma}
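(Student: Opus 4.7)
The plan is to convert the maximization into an equivalent minimization and then identify the optimizer through a direct exchange argument, since Euler--Lagrange alone is not enough when the optimum lies on the boundary of the feasible set. Writing $\frac{x}{x+\alpha g(x)} = 1 - \frac{\alpha g(x)}{x+\alpha g(x)}$, I would reformulate the problem as minimizing $J(g) = \int_0^1 \frac{\alpha g(x)}{x+\alpha g(x)}\,\mathrm{d}x$ subject to $\int_0^1 g(x)\,\mathrm{d}x = \tau$ and $0 \le g(x) \le 1$. The upper bound $g \le 1$ is implicit from the probabilistic origin of $g$ in Section~4 as a tail probability; without it, the supremum of the original objective equals $1$ but fails to be attained, since one could place arbitrarily large mass on a vanishing set near $x=0$.

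Next, I would use the structure of the integrand $\phi(x,g) = \frac{\alpha g}{x+\alpha g}$ to identify a bang-bang optimum. The marginal cost $\partial_g \phi = \frac{\alpha x}{(x+\alpha g)^2}$ reveals that placing additional mass at a larger $x$ is strictly cheaper than at a smaller $x$ (when compared at matched levels of $g$), so all of the allotted mass $\tau$ should be pushed to the top of the interval, giving $g^*(x) = \mathbf{1}_{[1-\tau,\,1]}(x)$. To make this rigorous I would run an exchange argument mirroring the $\Delta > 0$ computation from the preceding lemma: for any admissible $g \ne g^*$, there exist $x_1 \in [0,1-\tau]$ with $g(x_1) > 0$ and $x_2 \in [1-\tau,1]$ with $g(x_2) < 1$, and transferring an infinitesimal mass from $x_1$ to $x_2$ preserves the constraint while strictly decreasing $J$. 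Iterating (or invoking a Hardy--Littlewood-style rearrangement inequality exploiting monotonicity of $\phi$ in both $g$ and $-x$) forces the minimizer to coincide with $g^*$ almost everywhere.

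Having pinned down $g^*$, the remainder is a routine integration: the $[0, 1-\tau]$ piece contributes $1-\tau$ (the integrand equals $1$ where $g^* = 0$), and the top piece gives $\int_{1-\tau}^1 \frac{x}{x+\alpha}\,\mathrm{d}x = \tau - \alpha\log\frac{1+\alpha}{1-\tau+\alpha}$. Summing yields the closed form $1 + \alpha \log\!\bigl(1 - \frac{\tau}{1+\alpha}\bigr)$, which is the promised value and, after rewriting as $-\alpha \log\!\bigl(\frac{1+\alpha}{1+\alpha-\tau}\bigr)$ on the $ap_c$-scale, matches the upper-bound side of Theorem~\ref{thm:rank_bound}.

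The main obstacle is the rigorous justification of the bang-bang structure. Euler--Lagrange, which sufficed for the companion lower-bound lemma, produces only interior critical points of $J$; here the optimum sits on the boundary of $\{0 \le g \le 1\}$, so pure calculus of variations is insufficient. A direct exchange argument is the cleanest route and avoids having to encode the inequality constraint through KKT-style multipliers on an infinite-dimensional problem.
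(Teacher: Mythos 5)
Your proposal follows essentially the same route as the paper's proof: rewrite the maximization as minimizing $\int_0^1 \frac{\alpha g(x)}{x+\alpha g(x)}\,\mathrm{d}x$, establish the bang-bang optimum $g^* = \mathbf{1}_{[1-\tau,\,1]}$ by an exchange argument that moves mass from small $x$ to large $x$, and integrate to obtain $1+\alpha\log\bigl(1-\frac{\tau}{1+\alpha}\bigr)$. Your explicit remark that the upper bound $g\le 1$ must be imposed for the maximum to be attained (the paper's lemma statement omits it but its proof uses it, and it is justified since $g$ is a tail probability) is a worthwhile clarification, but otherwise the arguments coincide.
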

Lemma~\ref{eq:varational_lower_bound} and Lemma~\ref{eq:varational_upper_bound} for the bounds to the AP.
\begin{theorem}
Average Precision can be bounded from above and below as following
\begin{align}
1+\alpha_c \log \Bigg(1-\frac{R_c}{1+\alpha_c}\Bigg) \ge AP_c \ge \max\Bigg(1-\sqrt{\frac{2}{3}\alpha_c R_c}, \frac{8}{9}\frac{1}{1+2\alpha_c R_c}\Bigg)
\end{align}
\begin{proof}
Let us recap the definitions of $AP$ and $R$:
\begin{align*}
AP_c &= \int^1_0 \frac{\beta}{\beta+ \alpha_c P_{x\sim D_{\lnot c}}(s_c(x)> r^{-1}_c(\beta))}\mathrm{d}\beta \\
&= \int^1_0 \frac{\beta}{\beta+ \alpha_c \underbrace{P_{x\sim D_{\lnot c}}(r_c(s_c(x)) < \beta)}_{=g(\beta)}}\mathrm{d}\beta\\
R_c 
&=\int^1_0 \underbrace{P_{x^\prime\sim D_{\lnot c}}(r_c(s_c(x^\prime)) < \beta)}_{=g(\beta)}\mathrm{d}\beta = \tau
\end{align*}
where the second line of $AP_c$ is because $r_c$ is strictly monotonously decreasing. With $x= \beta$ and $g(x) = P_{x\sim D_{\lnot c}}(r_c(s_c(x))<\beta)$, Lemma~\ref{eq:varational_lower_bound} and Lemma~\ref{eq:varational_upper_bound} are directly applicable for a function $0\le g(x)\le 1$ with a fixed $R_c=\tau$. The corresponding upper and lower bounds of $\mathcal{L}^{\text{Det}}_c$ in Theorem~\ref{thm:rank_bound} is a direct consequence of this theorem since $\mathcal{L}^{\text{Det}}_c=1-AP_c$. \qed
\end{proof}
\end{theorem}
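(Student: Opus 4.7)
The plan is to reduce both $AP_c$ and $R_c$ to integrals involving a single auxiliary function and then solve the resulting constrained calculus-of-variations problem. Specifically, define
\[
g(\beta) \;=\; P_{x'\sim \pnc}\!\left(r_c(s_c(x')) < \beta\right) \;=\; P_{x'\sim \pnc}\!\left(s_c(x') > r_c^{-1}(\beta)\right),
\]
using strict monotonicity of $r_c$. Then the definition of probabilistic average precision gives $ap_c = \int_0^1 \tfrac{\beta}{\beta+\alpha_c g(\beta)}\,d\beta$, while the second equality in Definition~\ref{pre_def} yields $R_c = \int_0^1 g(\beta)\,d\beta$. Since $g$ is a probability, it satisfies $0 \le g(\beta) \le 1$. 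Bounding $ap_c$ for fixed $R_c = \tau$ therefore reduces to the variational problem: optimize $\int_0^1 \tfrac{x}{x+\alpha g(x)}\,dx$ over nonnegative $g$ bounded by $1$ with $\int_0^1 g\,dx = \tau$. The required theorem will follow by maximizing this integral (giving the upper bound on $ap_c$) and minimizing it (giving the lower bound).

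For the upper bound, I would rewrite the objective as $1 - \int_0^1 \tfrac{\alpha g(x)}{x+\alpha g(x)}\,dx$, so maximization becomes minimization of the second term. I would then use a rearrangement-style exchange argument: transferring mass of $g$ from a point $x_1$ to a point $x_2 > x_1$ strictly decreases $\int \tfrac{\alpha g(x)}{x+\alpha g(x)}\,dx$, because the integrand is concave and decreasing in $x$ for fixed $g$. Hence the minimizer pushes all mass to the largest values of $x$, saturating at $g(x)=1$ on $[1-\tau, 1]$ and $g(x)=0$ elsewhere. A direct computation of $\int_{1-\tau}^1 \tfrac{\alpha}{x+\alpha}\,dx$ then gives the closed-form upper bound $1 + \alpha_c\log\!\bigl(1 - \tfrac{R_c}{1+\alpha_c}\bigr)$.

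For the lower bound, I would enforce the nonnegativity constraint by parametrizing $g = v^2$ and write the Euler--Lagrange equation for the functional $\int \tfrac{x}{x+\alpha v^2}\,dx + \lambda\bigl(\int v^2\,dx - \tau\bigr)$. Solving the stationarity condition yields the pointwise solution $g(x) = \tfrac{1}{\alpha}(\sqrt{\alpha x/\lambda} - x)$ on the support $\{x : \alpha g(x) \ge 0\} = [0, \alpha/\lambda]$, and $g(x)=0$ otherwise. Plugging this back into the mass constraint gives an algebraic equation for $\lambda$, splitting into two regimes according to whether $\kappa := \min(1,\alpha/\lambda)$ equals $1$ or is strictly less than $1$. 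In the first regime (support is all of $[0,1]$) I would solve to find $\sqrt{\alpha/\lambda} = \tfrac{3}{2}(\alpha\tau + \tfrac{1}{2})$ and evaluate the integral to get $\tfrac{8}{9(1+2\alpha\tau)}$; in the second I would get $\lambda = \tfrac{1}{6}\sqrt{\alpha/\tau}$ and obtain $1 - \sqrt{2\alpha\tau/3}$. The larger of the two regime values is the true minimum, giving the stated $\max$ form of the lower bound.

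The main obstacle I expect is handling the two-regime structure of the lower-bound minimizer cleanly: the pointwise Euler--Lagrange equation only characterizes interior stationary points, and I need to track the activity of the boundary condition $g \ge 0$ as $\lambda$ varies, then argue that the $\max$ over both regimes is in fact the global minimum rather than only a local one. A secondary subtlety is that the upper-bound argument needs the additional constraint $g \le 1$ (which is active at the optimum), while the lower-bound computation does not saturate it; I would explicitly verify that the closed-form lower-bound minimizer satisfies $g(x) \le 1$ so that the same bound applies to the original constrained problem. Once both bounds on $\int \tfrac{x}{x+\alpha g(x)}\,dx$ are in hand, substituting $\tau = R_c$ and rearranging yields the stated inequalities on $ap_c = 1 - \ldet$.
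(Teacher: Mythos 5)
Your proposal follows essentially the same route as the paper: the same reduction of $AP_c$ and $R_c$ to integrals of $g(\beta)=P_{x'\sim D_{\lnot c}}(r_c(s_c(x'))<\beta)$, the same mass-exchange argument saturating $g=1$ on $[1-\tau,1]$ for the upper bound, and the same $g=v^2$ Euler--Lagrange computation with the two-regime ($\kappa=1$ versus $\kappa<1$) case split for the lower bound. The subtleties you flag (tracking the active constraints and taking the max over regimes) are handled the same way in the paper's Lemmas~\ref{eq:varational_lower_bound} and~\ref{eq:varational_upper_bound}, so no further changes are needed.
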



\section{Optimal Margins}
Similar to Cao et al.~\cite{ldam}, we aim to find optimal binary margins $\gamma_+$ and $\gamma_-$ under separability condition. This reduces the problem into following:
\begin{align}
&\text{minimize}_{\gamma_+, \gamma_-}\quad \frac{1}{\gamma_+}\sqrt{\frac{1}{n_+}}+\frac{1}{\gamma_-}\sqrt{\frac{1}{n_-}}\\
&\text{subject to} \quad \gamma_++\gamma_- = 1
\end{align}
Here the constraint is due to the fact that $s_-(x)=1-s_+(x)$ in binary case and thus $\gamma_-=1-\gamma_+$. Solving the constrained optimization problem
\begin{align}
L(\gamma_+, \gamma_-, \lambda) &= \frac{1}{\gamma_+}\sqrt{\frac{1}{n_+}}+\frac{1}{\gamma_-}\sqrt{\frac{1}{n_-}} + \lambda(\gamma_++\gamma_--1)\\
\Longrightarrow \frac{\partial}{\partial \gamma_+}L(\gamma_+, \gamma_-, \lambda) &= -\frac{1}{{\gamma_+}^2}\sqrt{\frac{1}{n_+}} + \lambda=0\\
\Longrightarrow \gamma_+ &= \sqrt{\frac{{n_+}^{-\frac{1}{2}}}{\lambda}},  \quad \gamma_- = \sqrt{\frac{{n_-}^{-\frac{1}{2}}}{\lambda}}\\
\Longrightarrow \frac{\partial}{\partial \lambda}L(\gamma_+, \gamma_-, \lambda)
&= \gamma_++\gamma_--1=0\\
\Longrightarrow \gamma_++\gamma_- &= \sqrt{\frac{{n_+}^{-\frac{1}{2}}}{\lambda}}+\sqrt{\frac{{n_-}^{-\frac{1}{2}}}{\lambda}}=1\\
\Longrightarrow \sqrt{\lambda} &= \frac{\sqrt{{n_+}^{-\frac{1}{2}}}+\sqrt{{n_-}^{-\frac{1}{2}}}}{1}\\
\Longrightarrow \gamma_+ &= \frac{{n_+}^{-\frac{1}{4}}}{{n_+}^{-\frac{1}{4}}+{n_-}^{-\frac{1}{4}}}= \frac{{n_-}^{\frac{1}{4}}}{{n_+}^{\frac{1}{4}}+{n_-}^{\frac{1}{4}}}\\
 \gamma_- &= \frac{{n_-}^{-\frac{1}{4}}}{{n_+}^{-\frac{1}{4}}+{n_-}^{-\frac{1}{4}}}= \frac{{n_+}^{\frac{1}{4}}}{{n_+}^{\frac{1}{4}}+{n_-}^{\frac{1}{4}}}
\end{align}
which are as desired. The exact same process can be repeated for each class $c\in C$ and we will have our Effective Class-Margins. \qed 
\section{Surrogate Scoring Function}
In this section, we will justify the choice of our surrogate scoring function. 
\begin{align}
\hat{s}_c(x) = \frac{w^+_c e^{f(x)_c}}{w^+_c e^{f(x)_c}+w^-_c e^{-f(x)_c}}
\end{align}
The decision boundary is then 
\begin{align}
\hat{s}_c(x) = \hat{s}_{\lnot c}(x)  &= 1-\hat{s}_c(x) \\
\Longrightarrow \frac{w^+_c e^{f(x)_c}}{w^+_c e^{f(x)_c}+w^-_c e^{-f(x)_c}} &= \frac{w^-_c e^{-f(x)_c}}{w^+_c e^{f(x)_c}+w^-_c e^{-f(x)_c}}\\
\Longrightarrow \log w^+_c + f(x)_c &= \log w^-_c -f(x)_c\\
\Longrightarrow f(x)_c &= \frac{1}{2}(\log w^-_c - \log w^+_c)
\end{align}
In the unweighted sigmoid function, this point will lie at
\begin{align}
s_c(x)&=\frac{ e^{f(x)_c}}{e^{f(x)_c}+ e^{-f(x)_c}}\\
&= \frac{e^{\frac{1}{2}(\log w^-_c - \log w^+_c)}}{e^{\frac{1}{2}(\log w^-_c - \log w^+_c)} + e^{\frac{1}{2}(\log w^+_c - \log w^-_c)}}\\
&= \frac{\sqrt{\frac{w^-_c}{w^+_c}}}{\sqrt{\frac{w^-_c}{w^+_c}} + \sqrt{\frac{w^+_c}{w^-_c}}}= \frac{\sqrt{\frac{\gamma^+_c}{\gamma^-_c}}}{\sqrt{\frac{\gamma^+_c}{\gamma^-_c}} + \sqrt{\frac{\gamma^-_c}{\gamma^+_c}}}= \frac{\gamma^+_c}{\gamma^+_c + \gamma^-_c}\\
&= \gamma^+_c\\
\Longrightarrow s_{\lnot c}(x) &= \gamma^-_c
\end{align}
since $\gamma^+_c+\gamma^-_c=1$. Hence, we have shown that our surrogate scoring function with effective class-margins shifts the decision boundary of sigmoid function to $\gamma^+_c$ and $\gamma^-_c$ as desired.\qed

\begin{figure}
\begin{center}
\begin{subfigure}{.48\textwidth}
  \includegraphics[width=\linewidth,page=1]{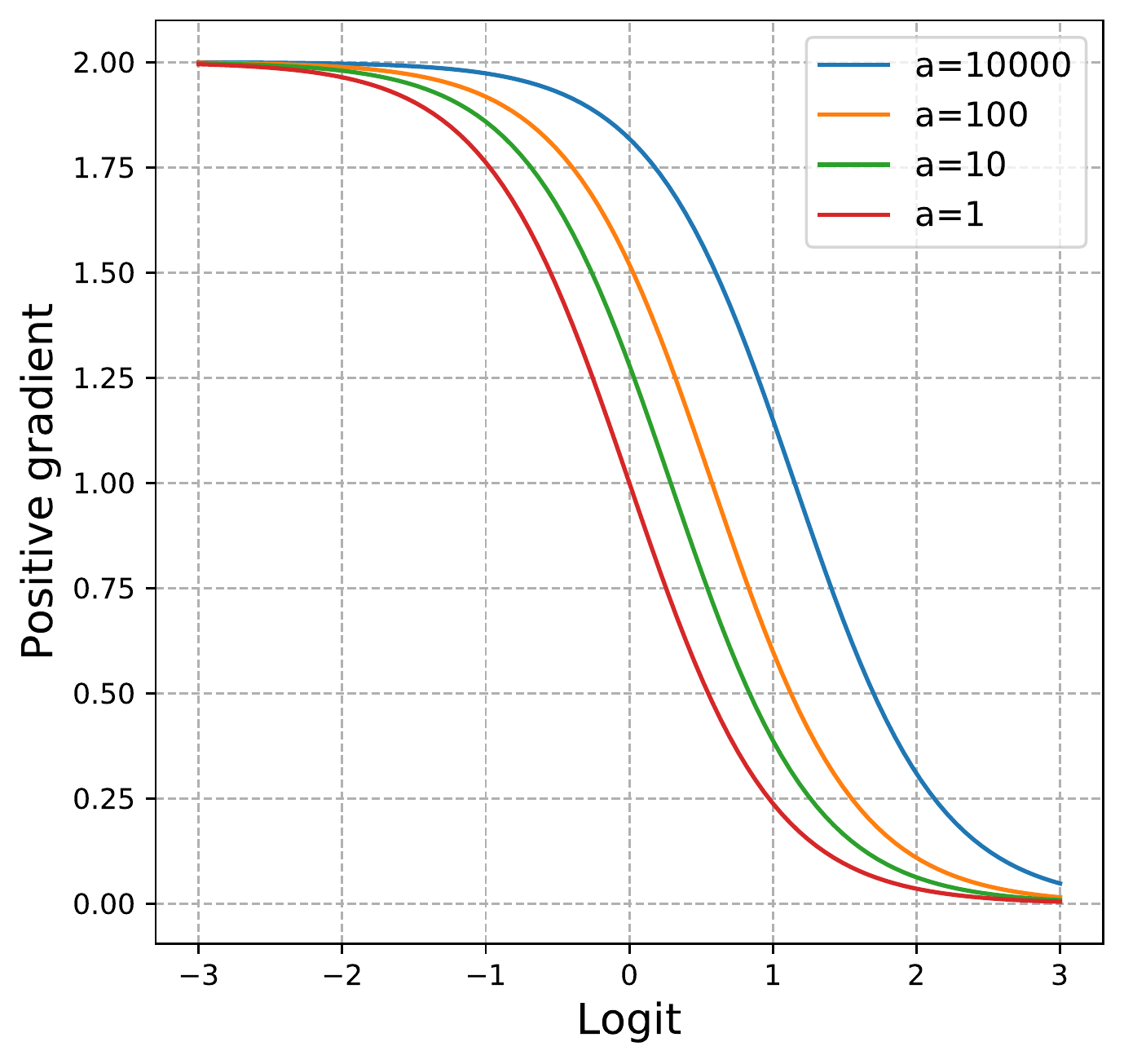}
  \label{fig:grad_pos}
\end{subfigure}
\hfill
\begin{subfigure}{.48\textwidth}
  \includegraphics[width=\linewidth,page=2]{pos_grad.pdf}
  \label{fig:grad_neg}
\end{subfigure}
\end{center}
\caption{Visualization of the positive and negative gradients from ECM Loss with different positive and negative samples ratios. }
\label{fig:grad_plots}
\end{figure}

\begin{figure}[t]
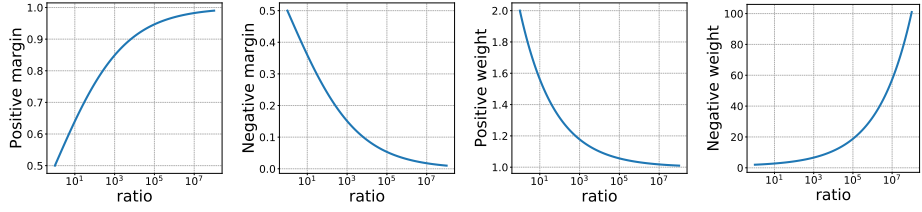

\begin{center}
\begin{subfigure}{.24\textwidth}
  \includegraphics[width=\linewidth,page=1]{pos_m.pdf}
\end{subfigure}
\hfill
\begin{subfigure}{.24\textwidth}
  \includegraphics[width=\linewidth,page=2]{pos_m.pdf}
\end{subfigure}
\hfill
\begin{subfigure}{.24\textwidth}
  \includegraphics[width=\linewidth,page=3]{pos_m.pdf}
\end{subfigure}
\hfill
\begin{subfigure}{.24\textwidth}
  \includegraphics[width=\linewidth,page=4]{pos_m.pdf}
\end{subfigure}
\end{center}
\caption{Visualization of the positive and negative margins and weights as a function of the sample ratio $\alpha_c$. }
\label{fig:mw_plots_supp}
\end{figure}

\begin{figure}[t]
\begin{center}
\begin{subfigure}{.48\textwidth}
  \includegraphics[width=\linewidth,page=1, trim={1.5cm 0 1.5cm 0}]{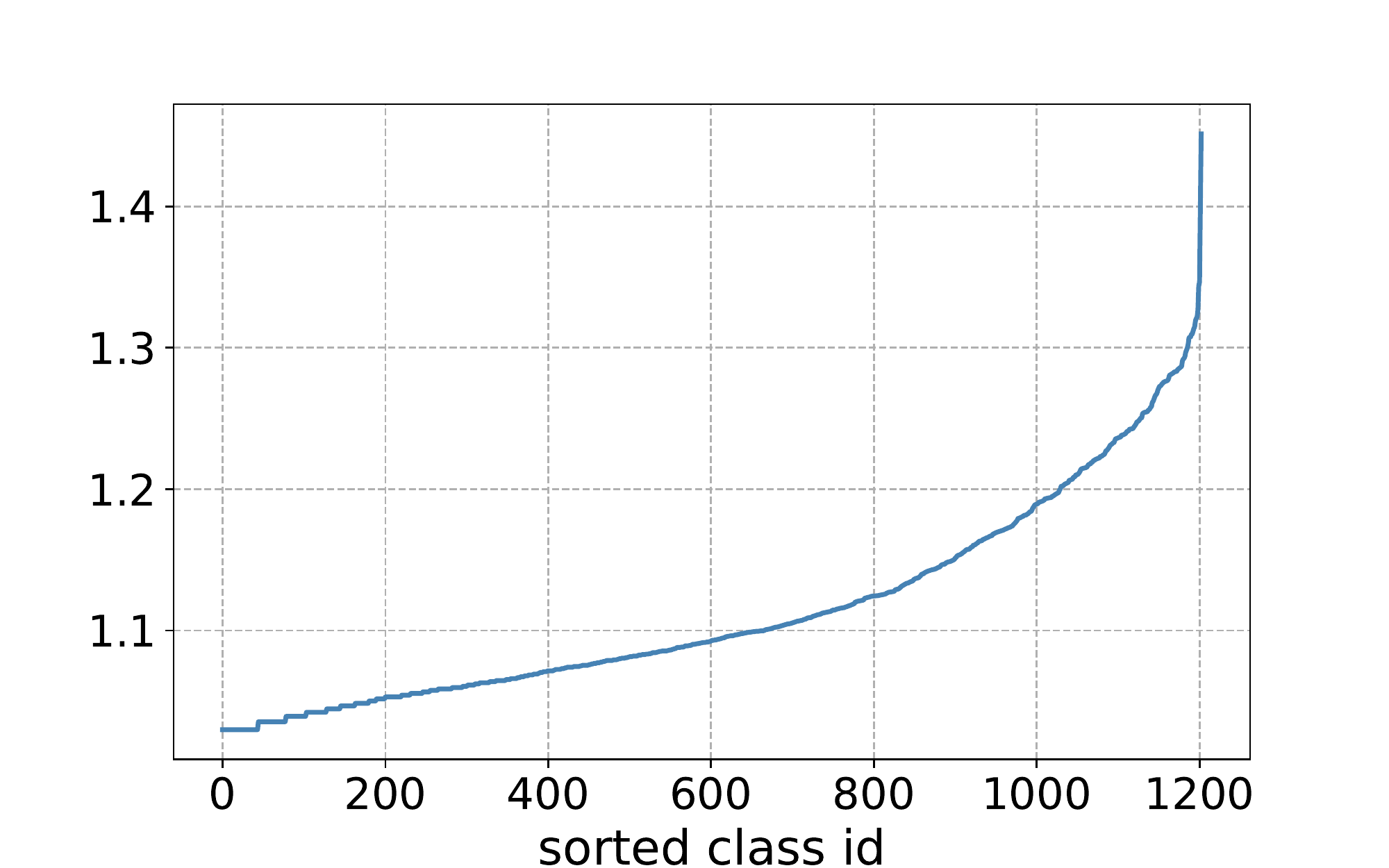}
  \caption{$w_c^+$}
  \label{fig:w_positive}
\end{subfigure}
\hfill
\begin{subfigure}{.48\textwidth}
  \includegraphics[width=\linewidth,page=2, trim={1.5cm 0 1.5cm 0}]{w_positive.pdf} \caption{$w_c^-$}
  \label{fig:w_negative}
\end{subfigure}
\end{center}
\caption{Visualization of the computed the positive and negative weights used in our scoring function $\hat{s}_c$ for each class $c\in C$. The prior distribution is from LVIS v1 training annotations over 1203 classes. The weights are sorted in ascending order and background probability is measured with Mask R-CNN with ResNet-50 backbone.}
\label{fig:margin_plot}
\end{figure}

\section{Margins vs Weights vs Gradients}
In this section, we provide more intuitions about the relationship between margins, weights, and gradients. The gradient of positive and negative samples with ECM Loss is the following:
\begin{align}
\frac{\partial}{\partial f(x)_c} \ell_{\text{ECM}}(x, y) 
&= \frac{2w_{\lnot c} e^{f(x)_{\lnot c}}}{w_c e^{f(x)_c} + w_{\lnot c} e^{f(x)_{\lnot c}}} \propto w_{\lnot c} = \frac{1}{\gamma_{\lnot c}} \propto n_{\lnot c}\\
\frac{\partial}{\partial f(x)_{\lnot c}} \ell_{\text{ECM}}(x, y) 
&= \frac{2w_c e^{f(x)_c}}{w_c e^{f(x)_c} + w_{\lnot c} e^{f(x)_{\lnot c}}} \propto w_c\:\; = \frac{1}{\gamma_c}\:\:\: \propto n_c
\end{align}
where we omit detection weight $m_c$ for simplicity. The positive gradient is greater for rare classes (higher $n_{\lnot c}$) compared to frequent classes, whereas the negative gradient is lower. This coincides with the intuitions from prior works based on heuristics or indirect measure of a model. Below, we show visualization of positive and negative gradients as a function of logit with different positive and negative ratios $a_c=\frac{n_{\lnot c}}{n_c}$. In Figure~\ref{fig:grad_plots}, low $a$ means frequent classes whereas high $a$ means rare classes. Our surrogate scoring function $\hat{s}_c$ balances the gradient values based on the frequency of each class. Frequent classes get lower positive gradient and higher negative gradient (red in~\ref{fig:grad_plots}) whereas rare classes get higher positive gradient and lower negative gradient (blue in~\ref{fig:grad_plots}). In Figure~\ref{fig:mw_plots_supp}, we further visualize the relationship between the positive and negative margins and weights as a function of the ratio $\alpha_c$.
In Figure~\ref{fig:margin_plot}, we visualize the computed weights for our scoring function $\hat{s}_c$ in Equation~\ref{eq:surrogate_s} for $w^+_c$ (left) and $w^-_c$ (right).


\section{Implementation Details}
In this section, we will discuss details of the experiments and implementation. 

\myparagraph{Background count.}
We empirically measure the frequency of background samples as a ratio of foreground and background samples within a batch, $r$, in the \textit{classification layer} of a detector during the first few iterations. This ratio will then be used to derive \textit{dataset-level} count of background samples as
\begin{align}
    n_{bg}=r\sum_{c\in C}n_c
\end{align}
where $n_c$ is the number of positive samples of class $c$ in the training dataset. Then, we compute the sample count of each class \textit{with background} as
\begin{align}
    n^+_c &= n_c\\
    n^-_c &= \Big(\sum_{c^\prime \in C\cup \{bg\}} n_{c^\prime}\Big) - n_c
\end{align}
and use it to compute the effective class-margins. For one-stage detectors, we only count for foreground classes as foreground and background imbalance is managed from focal weight~\cite{focal}. 

\myparagraph{Two-stage detectors.}
We train two-stage instance segmentation models based on Mask R-CNN~\cite{maskrcnn} and Cascade Mask R-CNN~\cite{cascadercnn} with various backbones, ResNet-50 and ResNet-101~\cite{resnet} with Feature Pyramid Network~\cite{fpn} pretrained on ImageNet-1K~\cite{imagenet}, Swin Transformer~\cite{swin} pretrained on ImageNet-21K with 224x224 image resolution.
We train with for 12 or 24 epochs with Repeated Factor Sampler (RFS)~\cite{lvis} on a $1\times$ or $2\times$ schedule respectively. 
For CNN backbones, we use the SGD optimizer with 0.9 momentum, initial learning rate of 0.02, weight decay of 0.0001, with step-wise scheduler decaying learning rate by 0.1 after 8 and 11 epochs for 1x, and 20 and 22 epochs for 2x, and batch size of 16 on 8 GPUs. Please note that the baseline methods are trained with their optimal learning rate schedule with decaying schedule of 16 and 22 epochs. For example, Mask R-CNN with ResNet-50 trained with Seesaw Loss~\cite{seesaw} on decaying schedule of 20 and 22 epochs result with 26.7 $\mathrm{mAP}_{\mathrm{segm}}$ and 26.9 $\mathrm{mAP}_{\mathrm{bbox}}$, whereas decaying schedule of 16 and 22 (default) result with 26.7 $\mathrm{mAP}_{\mathrm{segm}}$ and 27.3 $\mathrm{mAP}_{\mathrm{bbox}}$. 
For Transformer backbones, we use the AdamW optimizer with an initial learning rate of $0.00005$, beta set to $(0.9, 0.999)$, weight decay of $0.05$, with Cosine-annealing scheduler.
For all our models, we follow the standard data augmentation during training: random horizontal flipping and multi-scale image resizing to fit the shorter side of image to ($640, 672, 704, 736, 768, 800$) at random, and the longer side kept smaller than $1333$.
For Swin Transformer, we use a larger range of scale of the short side of image from $480$ to $800$.
For two-stage detectors, we normalize the classification layers with temperature $\tau=20$ for both box and mask classifications, and apply foreground calibration as post-process following prior practices~\cite{seesaw,disalign,ap_fixed}.
LVIS has more instances than COCO.
We thus increase the per-image detection limit to 300 from 100 and set the confidence threshold to 0.0001.
This is common practice in LVIS~\cite{lvis}.
For OpenImages, we train Cascade R-CNN with ResNet-50 backbone following the baseline and experimental setup of Zhou~\etal~\cite{unidet}.
All models in this experiment were trained for 180k iterations with a class-aware Sampler.

\myparagraph{One-stage detectors.}
For one-stage detectors, Focal Loss~\cite{focal} is the standard choice of the loss function.
It effectively diminishes loss values for ``easy'' samples such as the background.
In this experiment, we test the compatibility of our method with Focal weights.
Specifically, we apply the computed focal weights to our ECM Loss.
Instead of a binary cross-entropy on the surrogate scoring function $\hat{s}$, we use the focal loss.
We use a number of popular one-stage detectors: FCOS~\cite{fcos}, ATSS~\cite{atss} and VarifocalNet~\cite{varifocal}.
Each method uses either Focal Loss or a variant~\cite{varifocal}.
We use the default hyperparameter for all types of focal weights: $\gamma=2, \alpha=0.25$ for Focal Loss, $\gamma=2$ and $\alpha=0.75$ for Varifocal Loss~\cite{varifocal}.
In LVIS v1.0 models expect to see more instances.
We thus double the per-pyramid level number of anchor candidates from $9$ to $18$ for ATSS and VarifocalNet.
Similar to 2-stage detectors, we increased per-image detection to 300 and set the confidence threshold to 0.0001.
We train on ResNet-50 and ResNet-101 backbones for 12 epochs for 1x and 24 epochs for 2x, batch size of 16 on 8 GPUs. We set the learning rate to be 0.01 which was the optimal learning rate for the baselines. For all other settings, we follow the two-stage experiments. 

\clearpage

\end{document}